\documentclass[11pt]{article}

\usepackage{arxiv}
\usepackage{orcidlink}
\usepackage{graphicx}
\usepackage{amsmath,amssymb,amsfonts}
\usepackage{amsthm}
\usepackage{xcolor}
\usepackage{multirow}
\usepackage{booktabs}
\usepackage{array}
\usepackage{placeins}
\usepackage{tikz}
\usepackage{threeparttable}
\usepackage{enumitem}

\usepackage[sort&compress]{natbib}
\usepackage{hyperref}
\hypersetup{
  colorlinks=true,
  linkcolor=blue,
  citecolor=blue,
  urlcolor=magenta
}
\usepackage[nameinlink,noabbrev]{cleveref}

\DeclareMathOperator{\E}{\operatorname{E}}
\theoremstyle{plain}
\newtheorem{theorem}{Theorem}
\newtheorem{proposition}{Proposition}
\newtheorem{lemma}{Lemma}
\newtheorem{corollary}{Corollary}
\theoremstyle{remark}
\newtheorem{remark}{Remark}
\theoremstyle{definition}
\newtheorem{assumption}{Assumption}

\newcommand{\Prb}{\Pr}
\newcommand{\I}{\mathcal{I}}
\newcommand{\J}{\mathcal{J}}
\newcommand{\dd}{\,\mathrm{d}}
\newcommand{\logit}{\operatorname{logit}}
\newcommand{\tr}{\operatorname{tr}}
\newcommand{\diag}{\operatorname{diag}}
\newcommand{\ARE}{\operatorname{ARE}}
\newcommand{\RE}{\operatorname{RE}}
\newcommand{\err}{\operatorname{err}}

\title{Semi-Supervised Classification with Informative Missing Labels in Weibull Mixture Models}
\renewcommand{\shorttitle}{Weibull Mixture SSL}
\renewcommand{\headeright}{Wu, Wang, and McLachlan}

\author{
Jinran Wu~\orcidlink{0000-0002-2388-3614}\\
The University of Queensland, Australia\\
\texttt{jinran.wu@uq.edu.au}
\And
You-Gan Wang~\orcidlink{0000-0003-0901-4671}\\
The University of Queensland, Australia\\
\texttt{you-gan.wang@uq.edu.au}
\And
Geoffrey J. McLachlan~\orcidlink{0000-0002-5921-3145}\\
The University of Queensland, Australia\\
\texttt{g.mclachlan@uq.edu.au}
}
\date{}

\begin{document}
\maketitle

\begin{abstract}
We consider semi-supervised classification from a partially classified sample arising from a two-component Weibull mixture. The feature is observed for all data, whereas some class labels are missing. The probability of a missing label is modelled as a function of classification uncertainty, giving a feature-dependent missing-at-random (MAR) mechanism that shares parameters with the Weibull-mixture classifier. The missing-label indicators can therefore provide information about the classifier in addition to the observed features and available class labels. Under a common Weibull shape, a Bayes' rule has at most one positive decision boundary, which is unique when the rule is nonconstant; under unequal shapes, it can have two. We characterise these decision regions, derive the Fisher information for the classifier after adjustment for nuisance parameters in the missingness model, and obtain a decision-boundary expansion of the expected error rate of the plug-in sample rule relative to the Bayes error. The expansion yields classification-specific asymptotic relative efficiency formulas for the one- and two-boundary cases and shows that a positive-definite increase in Fisher information is sufficient, but not necessary, for a smaller first-order expected error rate. Numerical studies and a semi-synthetic analysis based on hard-drive failure data illustrate potential reductions in expected error rate and improvements in decision-boundary estimation from modelling feature-dependent label missingness.
\end{abstract}

\keywords{label missingness \and missing at random \and Weibull mixture \and Fisher information \and asymptotic relative efficiency}

\section{Introduction}\label{sec:introduction}

We consider semi-supervised classification for positive-valued data modelled by a two-component Weibull mixture. The Weibull distribution is widely used for lifetime and reliability data because its shape parameter accommodates a broad range of failure-rate behaviour \citep{menon1963,lai2011,zhao2011assessing}. Weibull mixtures allow for latent heterogeneity and multiple failure mechanisms and have a substantial statistical literature \citep{woodward1987,jewell1982, jiangmurthy1998, bucar2004,tsionas2002,farcomeni2010,panteleeva2015}. In a classification setting, \citet{ahmad1994updating} studied a nonlinear discriminant function for a two-component Weibull mixture using classified and unclassified observations. Here our interest is in the additional structure that arises when class labels are missing in a way that depends on the observed feature.

In likelihood-based semi-supervised learning, unlabelled data can contribute information through the marginal distribution of the observed feature \citep{dempster1977,mclachlanpeel2000,ahfock2023}. When the probability that a class label is missing depends on the observed feature, the missing-label pattern may itself be informative for the classifier. This possibility has been studied in Gaussian discrimination and more general likelihood-based and semi-supervised settings~\citep{ahfock2020,ahfock2023,wu2026, sportisse2023, lyu2024semi}. We therefore use a feature-dependent missing-at-random (MAR) model in which the missingness probability depends on classification uncertainty and shares parameters with the Weibull-mixture classifier. The present focus is on how this informative-missingness principle interacts with the non-Gaussian decision geometry of Weibull mixtures and, in turn, how the resulting information affects classification error through estimation of the Bayes decision boundaries.

The Weibull model gives a classification geometry that differs from the usual Gaussian case. Under a common component shape, the log-posterior odds are affine after a monotone power transformation, so a Bayes' rule has at most one positive decision boundary, which is unique when the rule is nonconstant. Under unequal component shapes, the log-posterior odds are nonlinear and can have two positive roots. One class may therefore occupy a bounded central interval and the other the two tails, or the roles may be reversed. This one- versus two-boundary structure is central to the classification-specific efficiency analysis developed below.

Our contributions are fourfold. First, we characterise the Bayes decision regions for common- and unequal-shape Weibull mixtures, including the two-boundary geometry that can arise under unequal shapes. Second, under a shared-parameter MAR label-missingness model, we derive the Fisher information for the classifier after adjustment for the nuisance parameters in the missingness mechanism, thereby separating information lost through unobserved labels from information gained through the missing-label indicators. Third, we derive a decision-boundary representation of the first-order expected error rate of the plug-in sample rule relative to the Bayes error. The expansion depends on estimation uncertainty only in parameter directions that perturb the decision boundaries, yields classification-specific asymptotic relative efficiency formulas for both the one- and two-boundary cases, and shows that a positive-definite increase in Fisher information is sufficient, but not necessary, for a smaller first-order expected error rate. Fourth, numerical simulations and a semi-synthetic study based on hard-drive failure data illustrate the finite-sample consequences for expected error rate and decision-boundary estimation. Relative to the Gaussian and more general informative-label-missingness settings cited above, the emphasis here is therefore on the way Weibull-specific decision geometry translates information in the missing-label pattern into classification efficiency.

\section{A two-component mixture of Weibull distributions for classification}\label{sec:weibull}

\subsection{Model and Bayes' rule}

Let $Y>0$ be a scalar feature and let $Z\in\{1,2\}$ denote its class. The class prior probabilities are
\[
\Prb(Z=i)=\pi_i,\qquad \pi_i>0,\qquad \pi_1+\pi_2=1.
\]
Conditional on $Z=i$, assume
\begin{equation*}
 f_i(y;k_i,\lambda_i)
 =\frac{k_i}{\lambda_i}\left(\frac{y}{\lambda_i}\right)^{k_i-1}
 \exp\left\{-\left(\frac{y}{\lambda_i}\right)^{k_i}\right\},
 \qquad y>0,
\end{equation*}
where $k_i>0$ and $\lambda_i>0$ are the shape and scale parameters. The corresponding distribution function is
\begin{equation*}
 F_i(y)=1-\exp\left\{-\left(\frac{y}{\lambda_i}\right)^{k_i}\right\}.
\end{equation*}
The marginal density of $Y$ is the two-component Weibull mixture
\begin{equation*}
 p_Y(y;\theta)=\pi_1 f_1(y)+\pi_2 f_2(y),
\end{equation*}
where $\theta$ denotes the Weibull-mixture parameter vector.

The posterior probability of membership in class 1 is
\begin{equation*}
 \tau_1(y;\theta)
 =\Prb(Z=1\mid Y=y)
 =\frac{\pi_1f_1(y)}{\pi_1f_1(y)+\pi_2f_2(y)},
 \qquad
 \logit\{\tau_1(y;\theta)\}=\delta(y;\theta),
\end{equation*}
where $\tau_2=1-\tau_1$ and
\begin{align}
 \delta(y;\theta)
 &=\log\frac{\pi_1f_1(y)}{\pi_2f_2(y)} \notag\\
 &=\log\frac{\pi_1}{\pi_2}+\log\frac{k_1}{k_2}
 -k_1\log\lambda_1+k_2\log\lambda_2
 +(k_1-k_2)\log y
 -\left(\frac{y}{\lambda_1}\right)^{k_1}
 +\left(\frac{y}{\lambda_2}\right)^{k_2}.
 \label{eq:delta-general}
\end{align}
Under zero-one loss, the Bayes' rule is
\begin{equation*}
 \mathcal C_\theta(y)=
 \begin{cases}
 1,&\delta(y;\theta)>0,\\
 2,&\delta(y;\theta)<0.
 \end{cases}
\end{equation*}
Ties occur only at roots of the log-posterior odds and hence on a set of probability zero in the cases considered below. A Bayes decision boundary is a positive root $y_b$ of the log-posterior odds function,
\begin{equation*}
 \delta(y_b;\theta)=0
 \quad\Longleftrightarrow\quad
 \pi_1 f_1(y_b)=\pi_2 f_2(y_b)
 \quad\Longleftrightarrow\quad
 \tau_1(y_b;\theta)=\tau_2(y_b;\theta)=\tfrac12.
\end{equation*}
For the asymptotic analysis in \Cref{sec:risk}, we restrict attention to simple roots, which locally separate regions assigned to different classes; a double root is instead a nonregular touching point. Throughout the paper, ``decision boundary'' is used in this classification sense.

For numerical and asymptotic calculations, we parameterise the original constrained parameters so that the transformed parameters range over the real line. Define
\[
 \rho=\log(\pi_1/\pi_2),\qquad
 \alpha_i=\log\lambda_i,\qquad
 \nu_i=\log k_i.
\]
Then $\logit(\pi_1)=\rho$, $\lambda_i=e^{\alpha_i}$, and $k_i=e^{\nu_i}$. For later use, set
\begin{equation*}
 a_i(y)=\log y-\alpha_i,
 \qquad
 r_i(y)=\exp\{k_i a_i(y)\}=\left(\frac{y}{\lambda_i}\right)^{k_i}.
\end{equation*}
The log-posterior odds function can be written as
\begin{equation*}
 \delta(y;\theta_U)
 =\rho+\nu_1-\nu_2+k_1a_1(y)-k_2a_2(y)-r_1(y)+r_2(y),
\end{equation*}
where
\[
 \theta_U=(\rho,\alpha_1,\nu_1,\alpha_2,\nu_2)^\top
\]
for the unequal-shape model.

\subsection{Common shape}
When $k_1=k_2=k$, let $\nu=\log k$. The constrained parameter vector is
\[
 \theta_C=(\rho,\alpha_1,\alpha_2,\nu)^\top.
\]
The log-posterior odds function simplifies to
\begin{align*}
 \delta_C(y;\theta_C)
 &=\rho+k\log(\lambda_2/\lambda_1)
 +\left(\lambda_2^{-k}-\lambda_1^{-k}\right)y^k \notag\\
 &=A+B y^k,
\end{align*}
where
\begin{equation*}
 A=\rho+k\log(\lambda_2/\lambda_1),
 \qquad
 B=\lambda_2^{-k}-\lambda_1^{-k}.
\end{equation*}
Consequently, the log-posterior odds function is affine in the transformed statistic $X=Y^k$, but is generally nonlinear in the original observation $Y$. Only when $k=1$ is it affine in $Y$ itself. Since $y^k$ is strictly increasing on $(0,\infty)$, a nonconstant Bayes' rule has a unique positive decision boundary if and only if $AB<0$, in which case
\begin{equation*}
 y_0=\left(-\frac{A}{B}\right)^{1/k}.
\end{equation*}

When $AB<0$, the Bayes error is available in closed form. If $B>0$,
\begin{equation*}
 \err_C(\theta_C)
 =\pi_1F_1(y_0)+\pi_2\{1-F_2(y_0)\},
\end{equation*}
and if $B<0$,
\begin{equation*}
 \err_C(\theta_C)
 =\pi_1\{1-F_1(y_0)\}+\pi_2F_2(y_0).
\end{equation*}
If no positive decision boundary exists, the Bayes' rule assigns every $y>0$ to the same class. Its error is $\pi_2$ when class 1 is always selected and $\pi_1$ when class 2 is always selected.

\subsection{Unequal shapes}
When $k_1\neq k_2$, the log-posterior odds function is nonlinear in both $y$ and $\log y$. Its derivative is
\begin{equation*}
 \delta_y(y;\theta_U)
 =\frac{k_1-k_2-k_1r_1(y)+k_2r_2(y)}{y}.
\end{equation*}
The next result describes the structure of the Bayes decision regions.

\begin{proposition}[Unequal-shape decision boundaries]\label{prop:unequal-boundary}
Suppose $k_1\neq k_2$.
\begin{enumerate}[label=(\alph*)]
\item The equation $\delta(y;\theta_U)=0$ has at most two positive roots.
\item If $k_1>k_2$, then $\delta(y)\to-\infty$ as $y\downarrow0$ and as $y\to\infty$. The log-posterior odds function has a unique global maximum. Hence it has either no positive root, one double root, or two simple roots $0<y_1<y_2$. In the two-root case, class 1 is selected on $(y_1,y_2)$ and class 2 in the two tails.
\item If $k_1<k_2$, then $\delta(y)\to+\infty$ as $y\downarrow0$ and as $y\to\infty$. The log-posterior odds function has a unique global minimum. Hence it has either no positive root, one double root, or two simple roots $0<y_1<y_2$. In the two-root case, class 2 is selected on $(y_1,y_2)$ and class 1 in the two tails.
\end{enumerate}
\end{proposition}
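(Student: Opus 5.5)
The plan is to work in the log scale $t=\log y\in\mathbb R$ and study $g(t)=\delta(e^t;\theta_U)$. With $a_i=t-\alpha_i$ and $r_i=e^{k_i(t-\alpha_i)}$, we have
\[
 g(t)=\rho+\nu_1-\nu_2+k_1a_1-k_2a_2-r_1+r_2 .
\]
Its derivative is $g'(t)=y\,\delta_y=k_1-k_2-k_1r_1+k_2r_2$. Differentiating once more gives $g''(t)=-k_1^2r_1+k_2^2r_2$. The key structural fact is that $g'$, viewed as a function of $t$, is a constant plus a linear combination of the two exponentials $e^{k_1t}$ and $e^{k_2t}$ with distinct rates.

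First I will show that $g'$ has at most one real zero. A function $c+\beta_1e^{k_1t}+\beta_2e^{k_2t}$ with $\beta_1=-k_1\lambda_1^{-k_1}<0$ and $\beta_2=k_2\lambda_2^{-k_2}>0$ is handled cleanly by dividing by $e^{k_2t}$. Set $h(t)=e^{-k_2t}g'(t)=(k_1-k_2)e^{-k_2t}-k_1\lambda_1^{-k_1}e^{(k_1-k_2)t}+k_2\lambda_2^{-k_2}$. Then
\[
 h'(t)=-k_2(k_1-k_2)e^{-k_2t}-k_1(k_1-k_2)\lambda_1^{-k_1}e^{(k_1-k_2)t}.
\]
If $k_1>k_2$, both terms are strictly negative. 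So $h$ is strictly decreasing, and $g'$ has at most one zero. If $k_1<k_2$, both terms are strictly positive, so again $h$ is strictly monotone. Hence $g'$ changes sign at most once, so $g$ has at most one critical point. By Rolle's theorem, $g$ then has at most two zeros, which gives (a). The same conclusion holds in $y$, since $t\mapsto e^t$ is a monotone bijection.

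Next I will establish the limits in (b). Take $k_1>k_2$. As $t\to+\infty$, the dominant term is $-r_1=-e^{k_1(t-\alpha_1)}$, which beats $r_2$ because $k_1>k_2$; hence $g\to-\infty$. As $t\to-\infty$, both $r_i\to0$, and the linear term $(k_1-k_2)t\to-\infty$; hence again $g\to-\infty$. A continuous function tending to $-\infty$ at both ends attains a global maximum, so $g'$ has at least one zero. By the first step that zero is unique, and $g'$ changes sign there from positive to negative, because $g$ is increasing near $-\infty$ and decreasing near $+\infty$. Thus $g$ is strictly increasing and then strictly decreasing, with a unique global maximum $M$. The trichotomy follows from the sign of $M$:
\begin{itemize}
\item[] if $M<0$, there is no root;
\item[] if $M=0$, the maximizer is a double root, since $g=g'=0$ there;
\item[] if $M>0$, strict monotonicity on each side gives exactly one simple root on each side, $y_1<y_2$, with $\delta>0$ exactly on $(y_1,y_2)$.
\end{itemize}
In the last case class 1 is selected on $(y_1,y_2)$ and class 2 on the two tails. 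Simplicity of the roots follows because $g'\neq0$ away from the unique critical point.

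Part (c) follows by symmetry. Swapping the labels $1\leftrightarrow2$ replaces $\delta$ by $-\delta$ and turns the case $k_1<k_2$ into the case of (b). Thus $\delta\to+\infty$ at both ends, $\delta$ has a unique global minimum, and class 2 occupies the bounded middle interval. The only step that needs real care is the first one: showing that $g'$ has a unique zero for a mixed-sign exponential sum. The device of dividing by $e^{k_2t}$, which makes both terms of $h'$ share the sign of $k_2-k_1$, is what I expect to carry it. The rest is routine asymptotics.
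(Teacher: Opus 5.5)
Your proof is correct. It differs from the paper's at the one step that carries the argument: showing that $\phi(t)=\delta(e^t;\theta_U)$ has a unique stationary point.

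\textbf{The paper's route.} It works one derivative higher. The function $\phi''=-k_1^2r_1+k_2^2r_2$ vanishes exactly once, because $r_2/r_1$ is strictly monotone in $t$. Hence $\phi'$ is unimodal. For $k_1>k_2$, this is combined with the limits $\phi'(t)\to k_1-k_2>0$ as $t\to-\infty$ and $\phi'(t)\to-\infty$ as $t\to\infty$ to get exactly one zero of $\phi'$. Part (a) is then read off from the case analysis, and the case $k_1<k_2$ is handled by reversing all signs.

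\textbf{Your route.} You show that $e^{-k_2t}\phi'(t)$ is strictly monotone. This gives ``at most one zero'' directly, with no asymptotics of $\phi'$. It also yields (a) for both shape orderings at once via Rolle. You take existence from the limits of $\phi$ itself, and you obtain (c) from (b) by the label swap $\delta\mapsto-\delta$ instead of redoing the signs.

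\textbf{One omission.} The paper proves that the tangential root is \emph{exactly} double. Your observation that $g=g'=0$ at the maximiser only gives multiplicity at least two. The paper closes this with a separate algebraic check that $\phi''(t_\star)\neq0$. Your construction gives it at once: since $g'=e^{k_2t}h$ and $h(t_\star)=0$,
\[
g''(t_\star)=e^{k_2t_\star}\{k_2h(t_\star)+h'(t_\star)\}=e^{k_2t_\star}h'(t_\star)\neq0 .
\]
Add that line, and note that multiplicity is preserved under $y=e^t$.

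As a notational point, the paper already uses $g$ for $\partial\delta/\partial\theta$, so call your log-scale function something else.
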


The proof is given in Appendix~\ref{app:roots}. This structure simplifies root finding: locate the unique stationary point of the log-posterior odds on the log scale, then bracket at most one root on each side. The result also supplies the decision-boundary geometry used in the ARE calculation. Figure~\ref{fig:decision-regions} illustrates these two forms of Bayes decision geometry: a single decision boundary under the common-shape model and two decision boundaries under the unequal-shape model.

\begin{figure}[!ht]
\centering

\begin{minipage}[t]{0.49\textwidth}
    \centering
    \includegraphics[width=\linewidth]
    {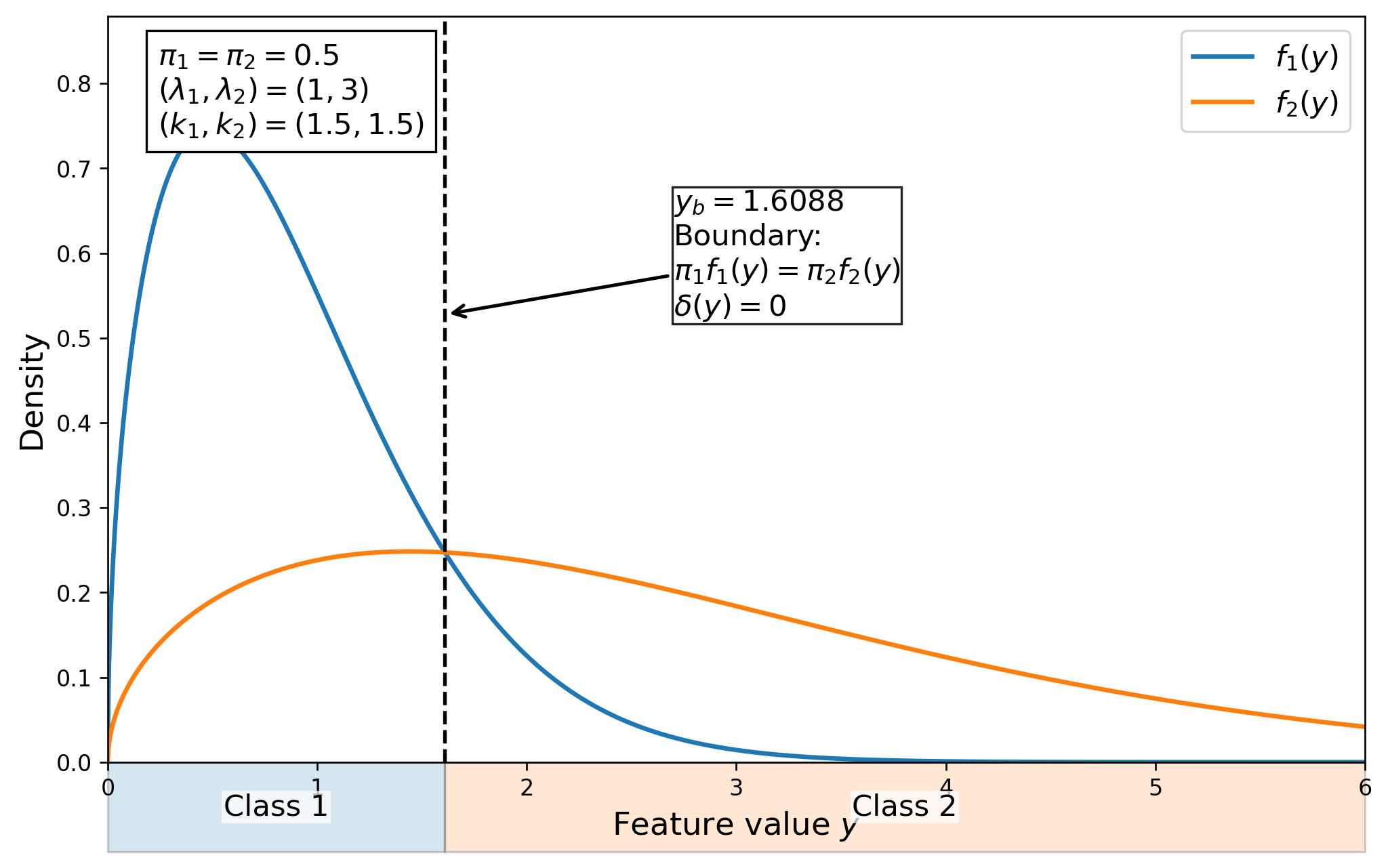}
    \par\smallskip
    \textbf{(a)}
\end{minipage}
\hfill
\begin{minipage}[t]{0.49\textwidth}
    \centering
    \includegraphics[width=\linewidth]
    {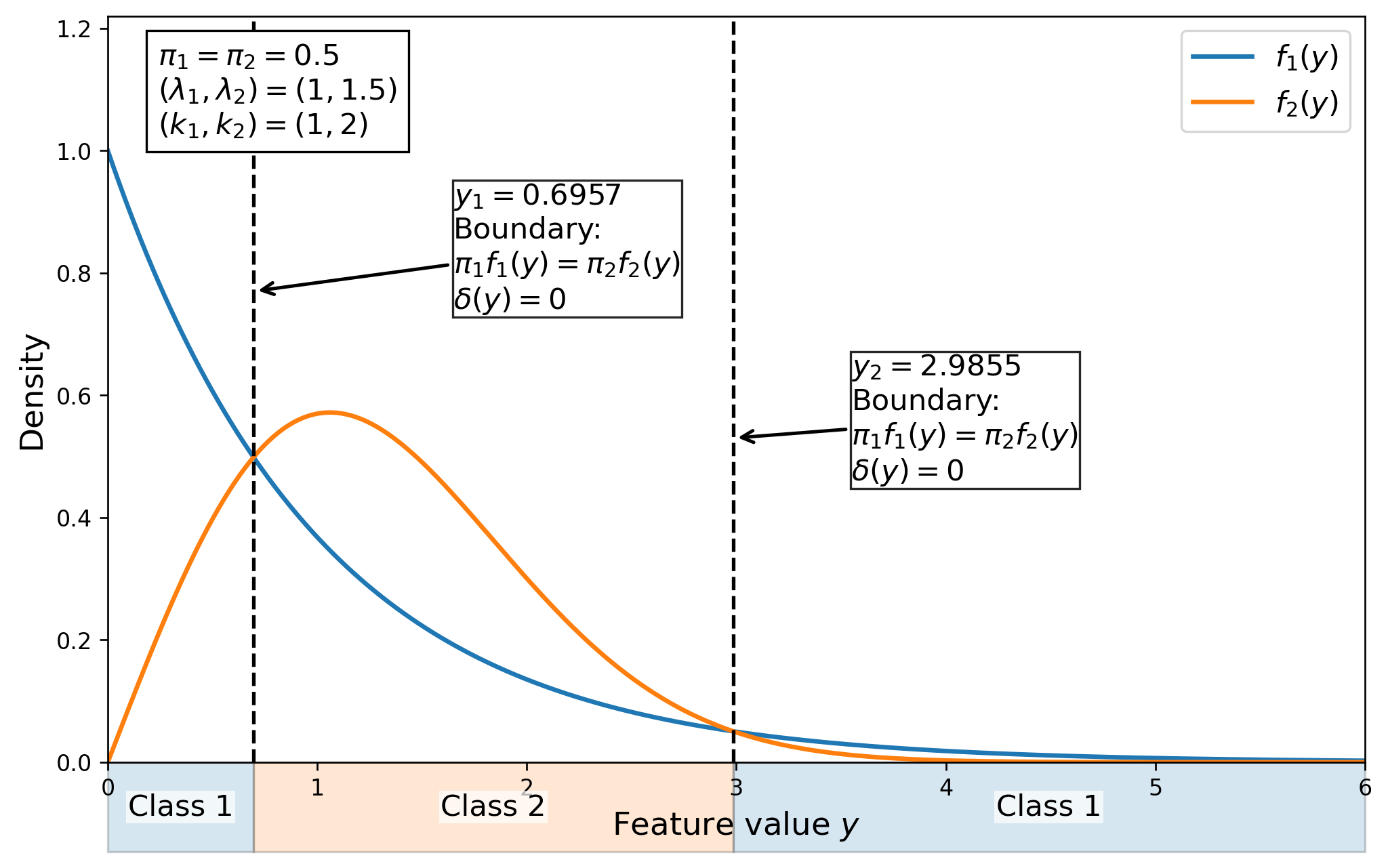}
    \par\smallskip
    \textbf{(b)}
\end{minipage}

\caption{
Bayes decision regions for representative two-component Weibull mixtures. Panel (a) shows the common-shape case with $\pi_1=\pi_2=0.5$, $(\lambda_1,\lambda_2)=(1,3)$, and $k_1=k_2=1.5$. The Bayes' rule has a single positive decision boundary at $y_b=1.6088$. Panel (b) shows the unequal-shape case with $\pi_1=\pi_2=0.5$, $(\lambda_1,\lambda_2)=(1,1.5)$, and $(k_1,k_2)=(1,2)$. The Bayes' rule has two positive decision boundaries, $y_1=0.6957$ and $y_2=2.9855$. The dashed vertical lines mark the roots of $\delta(y;\theta)=0$, equivalently the points at which $\pi_1 f_1(y)=\pi_2 f_2(y)$. The shaded bands indicate the corresponding Bayes decision regions.}
\label{fig:decision-regions}
\end{figure}

If $k_1>k_2$ and two roots exist, the Bayes error is
\begin{equation*}
 \err_U(\theta_U)
 =\pi_1\{F_1(y_1)+1-F_1(y_2)\}
 +\pi_2\{F_2(y_2)-F_2(y_1)\}.
\end{equation*}
If $k_1<k_2$ and two roots exist,
\begin{equation*}
 \err_U(\theta_U)
 =\pi_1\{F_1(y_2)-F_1(y_1)\}
 +\pi_2\{F_2(y_1)+1-F_2(y_2)\}.
\end{equation*}
When no positive roots exist, the Bayes' rule assigns all $y>0$ to the same class, and its error is the prior probability of the class that is never selected. The double-root case is excluded from the asymptotic boundary theory because $\delta_y=0$ at the touching point.

\subsection{Identifiability and label switching}
A finite mixture of univariate Weibull densities is identifiable up to a permutation of its components under the usual distinct-component conditions \citep{panteleeva2015}. In an entirely unlabelled sample, this leaves the familiar label-switching ambiguity. Partial class labels resolve that ambiguity provided both classes have positive probability of appearing among the labelled observations. 
\begin{assumption}[Identifiability conditions]\label{ass:identifiability}
The component parameter pairs $(k_1,\lambda_1)$ and $(k_2,\lambda_2)$ are distinct, $0<\pi_i<1$, and each class has positive probability of appearing among the labelled observations. In addition, $\delta(Y;\theta)^2$ is nondegenerate under the mixture, as required to identify both coefficients in the missingness model introduced below.
\end{assumption}
The distinct-component condition identifies the mixture up to permutation, while the labelled observations resolve the component-label permutation.

\section{Fisher information under the label-missingness model}\label{sec:information}

Here we consider a label-missingness mechanism for which the conditional probability of a missing label depends on the observed feature $Y$. Thus, the mechanism is MAR~\citep{rubin1976}. We retain the label-missingness mechanism in the statistical model because its linear predictor can depend on the same parameter vector that determines the log-posterior odds and hence Bayes' rule.

\subsection{Likelihoods for the partially classified sample}
\label{sec:likelihood}
Let $(Y_j,Z_j)$, $j=1,\ldots,n$, be independent observations from the two-component mixture of Weibull distributions. Let
\[
 M_j=
 \begin{cases}
 1,& Z_j\text{ is missing},\\
 0,& Z_j\text{ is observed}.
 \end{cases}
\]
The partially classified observation is $O_j=(Y_j,M_j,(1-M_j)Z_j)$. Write $z_{ij}=1$ if $Z_j=i$ and $0$ otherwise. The log likelihood formed from the features and their class labels, where available, while ignoring the label-missingness mechanism, is
\[ \ell_{\mathrm{ig}}(\theta)
 = \sum_{j=1}^n(1-M_j)\sum_{i=1}^2z_{ij}
 \log\{\pi_i f_i(Y_j)\} +\sum_{j=1}^n M_j\log p_Y(Y_j;\theta).
\]
If the sample were completely classified, the corresponding CC log likelihood would be
\begin{equation}
 \ell_{\mathrm{CC}}(\theta)
 =\sum_{j=1}^n\sum_{i=1}^2z_{ij}\log\{\pi_i f_i(Y_j)\}.
 \label{eq:cclik}
\end{equation}

We assume conditional independence of the missing-label indicator and the class label given the feature:
\begin{equation*}
 \Prb(M=1\mid Y=y,Z=z)=\Prb(M=1\mid Y=y)=q(y;\theta,\xi).
\end{equation*}
The missingness model is
\begin{equation}
 \logit\{q(y;\theta,\xi)\}
 =\eta(y;\theta,\xi)
 =\xi_0+\xi_1\delta(y;\theta)^2,
 \label{eq:qmodel}
\end{equation}
where $\xi=(\xi_0,\xi_1)^\top$. The case of interest is $\xi_1<0$, for which labels are most likely to be missing near a Bayes decision boundary. In estimation, however, $\xi_1$ is allowed to vary over $\mathbb R$; the inequality $\xi_1<0$ specifies the regime of primary interest rather than an optimisation constraint. When $\xi_1\neq0$, the missing-label probability varies with the observed feature and the mechanism is MAR. The case $\xi_1=0$ gives a constant missing-label probability and serves as an interior benchmark.

Although the mechanism is MAR in the feature-dependent case, \eqref{eq:qmodel} is a shared-parameter model: $q$ depends on the same $\theta$ as the class densities and Bayes' rule. Hence the parameter-distinctness condition ordinarily used for likelihood ignorability is not satisfied~\citep{rubin1976,lu2004missing}. The missing-label pattern can therefore provide information about the log-posterior odds function and the resulting Bayes' rule, so the label-missingness mechanism is retained in the likelihood.

The Bernoulli log likelihood for the missing-label indicators is
\begin{equation*}
 \ell_{\mathrm{miss}}(\theta,\xi)
 =\sum_{j=1}^n\left[
 M_j\log q(Y_j;\theta,\xi)
 +(1-M_j)\log\{1-q(Y_j;\theta,\xi)\}
 \right].
\end{equation*}
Combining these contributions gives the full log likelihood formed from the partially classified sample and its missing-label indicators
\begin{equation}
 \ell_{\mathrm{full}}(\theta,\xi)
 =\ell_{\mathrm{ig}}(\theta)+\ell_{\mathrm{miss}}(\theta,\xi).
 \label{eq:fulllik}
\end{equation}
We use the subscripts $\mathrm{CC}$, $\mathrm{ig}$, and $\mathrm{full}$ for the estimator based on the completely classified sample, the estimator based on the likelihood that ignores the label-missingness mechanism, and the estimator based on the full likelihood, respectively. Equivalently, a labelled observation contributes
\[
 \{1-q(y;\theta,\xi)\}\pi_z f_z(y),
\]
and an unlabelled observation contributes
\[
 q(y;\theta,\xi)p_Y(y;\theta).
\]

Because $\logit\{\tau_1(y)\}=\delta(y)$, entropy is an even, strictly decreasing function of $|\delta(y)|$. Hence $\delta(y)^2$ provides a smooth measure of classification certainty and attains its minimum at every Bayes decision boundary. Related uncertainty-dependent missing-label mechanisms have been used in likelihood-based semi-supervised classification~\citep{ahfock2023, lyu2024semi}. The expected missing-label fraction is
\begin{equation*}
 \gamma(\theta,\xi)=\E\{q(Y;\theta,\xi)\}.
\end{equation*}
All expectations below are with respect to the true mixture density unless stated otherwise.

\subsection{Log-posterior-odds derivatives and information in the completely classified sample}

Let
\[
 g(y;\theta)=\frac{\partial\delta(y;\theta)}{\partial\theta}.
\]
For the unequal-shape parametrisation $\theta_U=(\rho,\alpha_1,\nu_1,\alpha_2,\nu_2)^\top$,
\begin{equation*}
 g_U(y)=
 \begin{pmatrix}
 1\\[2pt]
 k_1(r_1-1)\\[2pt]
 1+k_1a_1(1-r_1)\\[2pt]
 k_2(1-r_2)\\[2pt]
 -1+k_2a_2(r_2-1)
 \end{pmatrix}.
\end{equation*}
For the common-shape parametrisation $\theta_C=(\rho,\alpha_1,\alpha_2,\nu)^\top$,
\begin{equation*}
 g_C(y)=
 \begin{pmatrix}
 1\\[2pt]
 k(r_1-1)\\[2pt]
 k(1-r_2)\\[2pt]
 k\{a_1(1-r_1)-a_2(1-r_2)\}
 \end{pmatrix}.
\end{equation*}
These derivatives also determine the information in both the class labels and the missing-label indicators.

Let $\gamma_E$ denote Euler's constant and define
\begin{equation*}
 C_W=\frac{\pi^2}{6}+(1-\gamma_E)^2.
\end{equation*}
For $Y\sim\mathrm{Weibull}(k,\lambda)$, the transformation
\[
 R=(Y/\lambda)^k
\]
has the standard exponential distribution. Under the log-scale/log-shape parametrisation $(\alpha,\nu)$, the class-conditional scores are
\begin{equation*}
 S_\alpha=k(R-1),
 \qquad
 S_\nu=1+\log R\,(1-R).
\end{equation*}
It follows that the per-observation Weibull information matrix for $(\alpha,\nu)$ is
\begin{equation*}
 W(k)=
 \begin{pmatrix}
 k^2 & k(\gamma_E-1)\\
 k(\gamma_E-1) & C_W
 \end{pmatrix}.
\end{equation*}

For unequal shapes, let
\[
 \theta_U=(\rho,\alpha_1,\nu_1,\alpha_2,\nu_2)^\top.
\]
The per-observation Fisher information in the completely classified sample is block diagonal:
\begin{equation*}
 \J_{\mathrm{CC},U}(\theta_U)
 =\diag\{\pi_1\pi_2,\,\pi_1W(k_1),\,\pi_2W(k_2)\}.
\end{equation*}

For a common shape, the per-observation information for $\theta_C=(\rho,\alpha_1,\alpha_2,\nu)^\top$ is
\begin{equation*}
 \J_{\mathrm{CC},C}=
 \begin{pmatrix}
 \pi_1\pi_2 & 0 & 0 & 0\\
 0 & \pi_1k^2 & 0 & \pi_1k(\gamma_E-1)\\
 0 & 0 & \pi_2k^2 & \pi_2k(\gamma_E-1)\\
 0 & \pi_1k(\gamma_E-1) & \pi_2k(\gamma_E-1) & C_W
 \end{pmatrix}.
\end{equation*}
The Fisher information in the completely classified sample is $\I_{\mathrm{CC}}=n\J_{\mathrm{CC}}$.

The quantities $\J_{\mathrm{lost}}$, $\J_{\mathrm{ig}}$, and $\J_{\mathrm{full}}$ considered below require one-dimensional integration because the posterior probabilities and missingness weights involve both components.

\subsection{Information loss and gain from the label-missingness mechanism}

For one observation, the log likelihood that ignores the label-missingness mechanism can be written as
\begin{equation}
 \ell_{\mathrm{ig},1}(\theta)
 =\log p(Y,Z;\theta)-M\log p(Z\mid Y;\theta).
 \label{eq:ig-decomp}
\end{equation}
Conditional on $Y=y$, the label is Bernoulli with success probability $\tau_1(y)$, where $\logit\{\tau_1(y)\}=\delta(y)$. Therefore, the conditional Fisher information about $\theta$ carried by $Z$, given $Y=y$, is
\begin{equation*}
 \J_{Z\mid Y}(y;\theta)
 =\tau_1(y)\tau_2(y)g(y;\theta)g(y;\theta)^\top.
\end{equation*}
Because $M\perp Z\mid Y$,
\begin{equation}
 \J_{\mathrm{lost}}(\theta,\xi)
 =\E\left[
 q(Y;\theta,\xi)\tau_1(Y;\theta)\tau_2(Y;\theta)
 g(Y;\theta)g(Y;\theta)^\top
 \right]
 \label{eq:Jlost}
\end{equation}
is the per-observation information lost through unobserved labels. Thus
\begin{equation}
 \J_{\mathrm{ig}}(\theta,\xi)
 =\J_{\mathrm{CC}}(\theta)-\J_{\mathrm{lost}}(\theta,\xi).
 \label{eq:Jig}
\end{equation}
Although $\ell_{\mathrm{ig}}(\theta)$ does not contain $\xi$, the matrix $\J_{\mathrm{ig}}$ depends on $\xi$ through the distribution of $M$. Under the shared-parameter model, $\ell_{\mathrm{ig}}$ is an estimating criterion rather than the full observed-data log likelihood. Nevertheless, its score satisfies an information equality at the true parameter, which justifies the inverse-information covariance used below.

\begin{lemma}[Information equality for the ignoring estimator]\label{lem:ig-information}
Let
\[
 S_Y(\theta)=\frac{\partial}{\partial\theta}\log p_Y(Y;\theta),
 \qquad
 S_{Z\mid Y}(\theta)=\frac{\partial}{\partial\theta}\log p(Z\mid Y;\theta),
\]
and let
\[
 \psi_{\mathrm{ig}}(\theta)
 =S_Y(\theta)+(1-M)S_{Z\mid Y}(\theta)
\]
be the score of $\ell_{\mathrm{ig},1}$. Under the conditional-independence assumption $M\perp Z\mid Y$ and the usual differentiability and moment conditions,
\begin{equation}
 -\E\left\{\frac{\partial\psi_{\mathrm{ig}}(\theta)}
 {\partial\theta^\top}\right\}
 =
 \E\{\psi_{\mathrm{ig}}(\theta)\psi_{\mathrm{ig}}(\theta)^\top\}
 =
 \J_{\mathrm{ig}}(\theta,\xi).
 \label{eq:ig-information-equality}
\end{equation}
If, in addition, $\widehat\theta_{\mathrm{ig}}\xrightarrow{p}\theta$, $\J_{\mathrm{ig}}$ is nonsingular, and the standard local regularity conditions for M-estimation hold, then
\begin{equation}
 \sqrt n(\widehat\theta_{\mathrm{ig}}-\theta)
 \xrightarrow{d}
 N\!\left(0,\J_{\mathrm{ig}}^{-1}\right).
 \label{eq:ig-asymptotic-normality}
\end{equation}
\end{lemma}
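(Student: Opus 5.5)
Write $\psi_{\mathrm{ig}}=S_Y+(1-M)S_{Z\mid Y}$ and compute both sides of \eqref{eq:ig-information-equality} by conditioning on $Y$. This is legitimate because $\ell_{\mathrm{ig},1}=\log p_Y(Y)+(1-M)\log p(Z\mid Y)$ by \eqref{eq:ig-decomp}. The basic facts are these. First, $\E\{S_{Z\mid Y}\mid Y\}=0$, since $Z\mid Y$ is Bernoulli with logit $\delta$ and $S_{Z\mid Y}=(z_{1}-\tau_1)g$. Second, $\E\{S_{Z\mid Y}S_{Z\mid Y}^\top\mid Y\}=\tau_1\tau_2gg^\top=-\E\{\partial S_{Z\mid Y}/\partial\theta^\top\mid Y\}$. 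Third, $\E\{S_YS_Y^\top\}=-\E\{\partial S_Y/\partial\theta^\top\}=:\J_Y$ by the ordinary information identity for the marginal density $p_Y$. Because $M\perp Z\mid Y$ and $\Prb(M=1\mid Y)=q(Y)$, conditional expectations of products involving $(1-M)$ and $Z$ factor into $\{1-q(Y)\}$ times the corresponding $Z\mid Y$ moment.

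\emph{Step 1 (Hessian side).} Here $-\partial\psi_{\mathrm{ig}}/\partial\theta^\top=-\partial S_Y/\partial\theta^\top-(1-M)\partial S_{Z\mid Y}/\partial\theta^\top$. Note that $M$ is data and is not differentiated, even though its law depends on $\theta$. Taking expectations and conditioning on $Y$ gives
\[
\J_Y+\E[\{1-q(Y)\}\tau_1\tau_2gg^\top].
\]

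\emph{Step 2 (outer-product side).} Expanding $\psi_{\mathrm{ig}}\psi_{\mathrm{ig}}^\top$ gives three kinds of terms. The cross terms $\E[(1-M)S_YS_{Z\mid Y}^\top]$ vanish: $S_Y$ is $Y$-measurable, and $\E[(1-M)S_{Z\mid Y}\mid Y]=\{1-q\}\E[S_{Z\mid Y}\mid Y]=0$ by conditional independence. The square term uses $(1-M)^2=1-M$, so it also equals $\E[\{1-q\}\tau_1\tau_2gg^\top]$. Hence both sides coincide.

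\emph{Step 3 (identification with \eqref{eq:Jig}).} Use the complete-data decomposition $\J_{\mathrm{CC}}=\J_Y+\E[\tau_1\tau_2gg^\top]$, which follows from the same conditioning argument with $M\equiv0$ applied to $\log p(Y,Z)=\log p_Y+\log p(Z\mid Y)$. Then
\[
\J_Y+\E[(1-q)\tau_1\tau_2gg^\top]=\J_{\mathrm{CC}}-\E[q\tau_1\tau_2gg^\top]=\J_{\mathrm{CC}}-\J_{\mathrm{lost}},
\]
which is \eqref{eq:Jlost}--\eqref{eq:Jig}.

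\emph{Step 4 (asymptotics).} Apply standard M-estimation theory. First check $\E\psi_{\mathrm{ig}}(\theta)=0$ at the truth: $\E S_Y=0$, and the second term vanishes by the conditioning in Step 2. A Taylor expansion of $n^{-1}\sum\psi_{\mathrm{ig},j}(\widehat\theta_{\mathrm{ig}})=0$ around $\theta$, using consistency and a uniform law of large numbers for the Hessian, together with the central limit theorem for $n^{-1/2}\sum\psi_{\mathrm{ig},j}$, gives the sandwich covariance $A^{-1}BA^{-\top}$. Here $A=B=\J_{\mathrm{ig}}$ by Steps 1--2, so the covariance reduces to $\J_{\mathrm{ig}}^{-1}$.

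The main obstacle is conceptual rather than computational. Since $\ell_{\mathrm{ig}}$ is not the true likelihood (the true one includes $\ell_{\mathrm{miss}}$, which shares $\theta$), the identity cannot be quoted from likelihood theory. It must instead be verified directly through the conditioning arguments above, which rely on $M\perp Z\mid Y$ and on keeping the $\theta$-dependence of the law of $M$ out of the differentiation. I will also note the interchange of differentiation and integration needed for $\E[S_Y]=0$ and for the information identity for $p_Y$; this is covered by the assumed regularity conditions.
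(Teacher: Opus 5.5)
Your proof is correct and follows the paper's argument essentially step for step. You condition on $Y$ and use $M\perp Z\mid Y$ to eliminate the cross terms, use $(1-M)^2=1-M$ to reduce both the Hessian and outer-product sides to $\J_Y+\E[\{1-q(Y)\}\J_{Z\mid Y}(Y;\theta)]$, identify this with $\J_{\mathrm{ig}}$ via $\J_{\mathrm{CC}}=\J_Y+\E\{\J_{Z\mid Y}\}$, and collapse the sandwich covariance to $\J_{\mathrm{ig}}^{-1}$. Your explicit check that $\E\psi_{\mathrm{ig}}(\theta)=0$ and your remark that $M$ is treated as data rather than differentiated only make explicit what the paper leaves implicit.
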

A proof is given in Appendix~\ref{app:information}.

Let
\begin{equation*}
 s(y;\theta)=\delta(y;\theta)^2,
 \qquad
 x(y;\theta)=\begin{pmatrix}1\\s(y;\theta)\end{pmatrix},
 \qquad
 w(y;\theta,\xi)=q(y)\{1-q(y)\}.
\end{equation*}
The gradient of the missingness linear predictor is
\begin{equation*}
 \frac{\partial\eta}{\partial\theta}
 =2\xi_1\delta(y;\theta)g(y;\theta)
 \equiv u(y;\theta,\xi),
 \qquad
 \frac{\partial\eta}{\partial\xi}=x(y;\theta).
\end{equation*}
The per-observation Fisher information from $M\mid Y$ for $(\theta^\top,\xi^\top)^\top$ has the following blocks, with their dependence on $(\theta,\xi)$ suppressed for brevity:
\begin{align*}
 B_{\theta\theta}
 &=\E\{w(Y)u(Y)u(Y)^\top\},\\
 B_{\theta\xi}
 &=\E\{w(Y)u(Y)x(Y)^\top\},\\
 B_{\xi\xi}
 &=\E\{w(Y)x(Y)x(Y)^\top\}.
\end{align*}
When $\xi$ is estimated jointly with $\theta$, the information about $\theta$ contributed by the missing-label indicators, after adjustment for $\xi$, is the Schur complement
\begin{equation}
 \J_{\mathrm{miss}}^{\mathrm{adj}}(\theta,\xi)
 =B_{\theta\theta}
 -B_{\theta\xi}B_{\xi\xi}^{-1}B_{\xi\theta}.
 \label{eq:Jmissadj}
\end{equation}
This positive-semidefinite matrix is the part of the missingness information about $\theta$ that remains after adjusting for the nuisance parameters $(\xi_0,\xi_1)$.

\begin{theorem}[Information under the full likelihood]\label{thm:information} 
Assume the model is identifiable and that the true parameter lies in an identifiable regular interior neighbourhood. Assume further that differentiation may be interchanged with integration, $B_{\xi\xi}$ is nonsingular, the relevant full-likelihood and completely classified likelihood maximisers are consistent for the true parameter, and the standard local regularity conditions for maximum-likelihood estimation hold. The per-observation Fisher information for $\theta$, after adjustment for the nuisance parameter $\xi$, is
\begin{equation}
 \J_{\mathrm{full}}(\theta,\xi)
 =\J_{\mathrm{CC}}(\theta)
 -\J_{\mathrm{lost}}(\theta,\xi)
 +\J_{\mathrm{miss}}^{\mathrm{adj}}(\theta,\xi).
 \label{eq:main-info-decomp}
\end{equation}
Consequently,
\begin{equation*}
 \J_{\mathrm{full}}(\theta,\xi)-\J_{\mathrm{CC}}(\theta)
 =\J_{\mathrm{miss}}^{\mathrm{adj}}(\theta,\xi)
 -\J_{\mathrm{lost}}(\theta,\xi).
\end{equation*}
If
\begin{equation}
 \J_{\mathrm{miss}}^{\mathrm{adj}}(\theta,\xi)
 -\J_{\mathrm{lost}}(\theta,\xi)>0,
 \label{eq:info-dominance}
\end{equation}
and both $\J_{\mathrm{full}}(\theta,\xi)$ and $\J_{\mathrm{CC}}(\theta)$ are positive definite, then the estimator based on the full likelihood has a smaller asymptotic covariance matrix than the maximum-likelihood estimator based on the completely classified sample.
\end{theorem}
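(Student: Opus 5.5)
The plan is to factor the full per-observation likelihood and read the information blocks off the factorisation. For one observation, \eqref{eq:fulllik} gives
\[
 \ell_{\mathrm{full},1}(\theta,\xi)=\ell_{\mathrm{ig},1}(\theta)+\ell_{\mathrm{miss},1}(\theta,\xi),
\]
where $\ell_{\mathrm{miss},1}$ is the Bernoulli log likelihood of $M\mid Y$ and $\ell_{\mathrm{ig},1}$ is given by \eqref{eq:ig-decomp}. The full-likelihood score for $(\theta^\top,\xi^\top)^\top$ is therefore $(\psi_{\mathrm{ig}}+S_{M,\theta},\,S_{M,\xi})$. Here
\[
 S_{M,\theta}=\{M-q(Y)\}u(Y),\qquad S_{M,\xi}=\{M-q(Y)\}x(Y),
\]
since $\partial\log q/\partial\eta=1-q$ and $\partial\log(1-q)/\partial\eta=-q$.

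First I will show that the two score pieces are uncorrelated. Condition on $Y$. Because $M\perp Z\mid Y$, we have $\E[(M-q)\mid Y,Z]=0$. Both $S_Y$ and $S_{Z\mid Y}$ are functions of $(Y,Z)$, so $\E\{(1-M)S_{Z\mid Y}(M-q)u^\top\}$ equals $\E\{S_{Z\mid Y}\,\E[(1-M)(M-q)\mid Y,Z]\,u^\top\}$. The inner conditional expectation is $-q(1-q)$, a function of $Y$ alone. This leaves $\E\{S_{Z\mid Y}\mid Y\}=0$, which kills the term. The $S_Y$ term vanishes directly.

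The information matrix of the full score is then block structured:
\[
 \begin{pmatrix}\J_{\mathrm{ig}}+B_{\theta\theta}&B_{\theta\xi}\\ B_{\xi\theta}&B_{\xi\xi}\end{pmatrix}.
\]
The $\J_{\mathrm{ig}}$ block comes from Lemma~\ref{lem:ig-information}. The $B$ blocks use $\E\{(M-q)^2\mid Y\}=w(Y)$. Since $\ell_{\mathrm{full}}$ is a genuine log likelihood, I will use the information equality to identify this with the negative expected Hessian. This matters because the Hessian of $\ell_{\mathrm{miss}}$ contains the term $(M-q)\partial^2\eta/\partial\theta\partial\theta^\top$, which has mean zero given $Y$, and the cross-derivatives with $\xi$ likewise reduce to the $B$ blocks.

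Next I will take the Schur complement with respect to the nuisance block. With $B_{\xi\xi}$ nonsingular, the $\theta$-block of the inverse is the inverse of
\[
 \J_{\mathrm{ig}}+B_{\theta\theta}-B_{\theta\xi}B_{\xi\xi}^{-1}B_{\xi\theta}.
\]
Substituting \eqref{eq:Jig} and \eqref{eq:Jmissadj} gives \eqref{eq:main-info-decomp}, and the difference identity follows by subtracting $\J_{\mathrm{CC}}$.

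For the covariance comparison, I will invoke the assumed consistency and the standard regularity conditions. These give $\sqrt n(\widehat\theta_{\mathrm{full}}-\theta)\to N(0,\J_{\mathrm{full}}^{-1})$ and $\sqrt n(\widehat\theta_{\mathrm{CC}}-\theta)\to N(0,\J_{\mathrm{CC}}^{-1})$. Condition \eqref{eq:info-dominance} says $\J_{\mathrm{full}}>\J_{\mathrm{CC}}$, both positive definite. Operator monotonicity of matrix inversion then gives $\J_{\mathrm{full}}^{-1}<\J_{\mathrm{CC}}^{-1}$. To see this, write $\J_{\mathrm{CC}}^{-1/2}\J_{\mathrm{full}}\J_{\mathrm{CC}}^{-1/2}>I$, so its eigenvalues exceed one, and invert.

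The main obstacle is the cross-term cancellation. It is the step that genuinely uses $M\perp Z\mid Y$ together with the fact that $(1-M)$ multiplies the label score. I would verify it carefully by conditioning first on $(Y,Z)$ and then on $Y$. The same care is needed to confirm that the expected Hessian of $\ell_{\mathrm{ig}}$ is $\J_{\mathrm{ig}}$ rather than something involving $\partial q/\partial\theta$; that step rests on Lemma~\ref{lem:ig-information}.
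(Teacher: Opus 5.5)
Your proposal is correct and follows essentially the paper's argument: the same block information matrix for $(\theta^\top,\xi^\top)^\top$, the Schur complement with respect to $B_{\xi\xi}$, substitution of \eqref{eq:Jig} and \eqref{eq:Jmissadj}, and reversal of the Loewner order under inversion. The one difference is that the paper builds the block matrix from the expected negative Hessian, where additivity of $\ell_{\mathrm{ig}}$ and $\ell_{\mathrm{miss}}$ removes the $\psi_{\mathrm{ig}}$--$S_{M,\theta}$ cross terms you flag as the main obstacle. Your score-covariance route instead needs that cancellation, and you supply it correctly via $M\perp Z\mid Y$ and $M(1-M)=0$.
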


A derivation is given in Appendix~\ref{app:information}. In subsequent displays, parameter arguments of the information matrices are suppressed when no ambiguity arises.

\begin{remark}[Role of the missingness parameters]
If $\xi$ were known, the missingness information would be $B_{\theta\theta}$; estimating $\xi$ subtracts the nuisance adjustment $B_{\theta\xi}B_{\xi\xi}^{-1}B_{\xi\theta}$. If $\xi_1=0$, then $\J_{\mathrm{miss}}^{\mathrm{adj}}=0$ and $\J_{\mathrm{full}}=\J_{\mathrm{CC}}-\J_{\mathrm{lost}}\leq\J_{\mathrm{CC}}$. Thus, a constant missing-label probability cannot offset the loss of class labels; any such offset requires feature-dependent MAR label missingness that is informative about $\theta$ through the shared-parameter structure.
\end{remark}

\begin{remark}[Augmented benchmark based on the completely classified sample]\label{rem:augmented-complete}
Consider the augmented experiment in which $(Y,Z,M)$ is observed. After adjustment for the nuisance parameter $\xi$, its information about $\theta$ is
\begin{equation*}
 \J_{\mathrm{CC}}^{\mathrm{aug}}
 =\J_{\mathrm{CC}}+\J_{\mathrm{miss}}^{\mathrm{adj}}.
\end{equation*}
Consequently,
\begin{equation*}
 \J_{\mathrm{full}}
 =\J_{\mathrm{CC}}^{\mathrm{aug}}-\J_{\mathrm{lost}}
 \leq \J_{\mathrm{CC}}^{\mathrm{aug}}.
\end{equation*}
Thus, the partially classified sample analysed with the full likelihood cannot dominate the augmented experiment that observes $(Y,Z,M)$.
\end{remark}

\begin{remark}
For any integrable function $h$,
\begin{equation*}
 \E\{h(Y)\}
 =\sum_{i=1}^2\pi_i\int_0^\infty
 h\left(\lambda_i r^{1/k_i}\right)e^{-r}\dd r.
\end{equation*}
Thus all required expectations, including $\gamma$, the Bayes error, and the ARE, can be evaluated by one-dimensional Gauss--Laguerre quadrature.
\end{remark}

\section{Expected error rate of the plug-in sample rule and asymptotic relative efficiency}\label{sec:risk}

\subsection{Expected error rate of the plug-in sample rule}
For an estimator $\widehat\theta_s$, the plug-in sample rule is $\mathcal C_{\widehat\theta_s}$. Its expected error rate, averaged over repeated training samples, is $\E\{R(\widehat\theta_s;\theta)\}$, where $\theta$ denotes the data-generating parameter. Because the Bayes error is fixed by $\theta$, estimators can be compared through
\[
 \E\{R(\widehat\theta_s;\theta)\}-\err(\theta),
 \qquad s\in\{\mathrm{CC},\mathrm{ig},\mathrm{full}\},
\]
which gives the usual sample-size interpretation of asymptotic relative efficiency~\citep{efron1975,ahfock2020}.

\subsection{Decision-boundary expansion}
For the local expansion, let $\widetilde\theta$ denote a generic parameter value used to form the classification rule, while $\theta$ denotes the data-generating parameter. The error rate of the corresponding rule is
\begin{equation*}
 R(\widetilde\theta;\theta)
 =\pi_1\Prb\{\delta(Y;\widetilde\theta)<0\mid Z=1\}
 +\pi_2\Prb\{\delta(Y;\widetilde\theta)>0\mid Z=2\},
\end{equation*}
where the probabilities are evaluated under the data-generating parameter $\theta$. At $\widetilde\theta=\theta$, this equals the Bayes error $\err(\theta)$. Throughout this section, $\delta$ denotes the log-posterior odds $\log\{\pi_1f_1/(\pi_2f_2)\}$ rather than an arbitrary rescaling of the decision score. Let
\[
 \mathcal{B}(\theta)=\{y>0:\delta(y;\theta)=0\}
\]
denote the Bayes decision boundaries.

\begin{assumption}[Stable simple boundaries]\label{ass:boundaries}
The Bayes' rule is nonconstant and $\mathcal{B}(\theta)=\{y_1,\ldots,y_K\}$ is finite. Each boundary lies in $(0,\infty)$ and is simple, $\delta_y(y_j;\theta)\neq0$. In a neighbourhood of $\theta$, $\delta(y;\widetilde\theta)$ is twice continuously differentiable in $(y,\widetilde\theta)$, the number of roots remains $K$, and the roots can be represented by differentiable functions $y_j(\widetilde\theta)$ that do not merge or escape to $0$ or $\infty$.
\end{assumption}
For the common-shape model, $K=1$ when $AB<0$; this structure is locally stable whenever $A$ and $B$ remain bounded away from zero. For the unequal-shape model, the two-boundary structure is locally stable when the unique extremum is strictly separated from zero, and both roots are simple. Tangent roots, root mergers, and constant Bayes' rules are nonregular cases and are excluded from the expansion below.

\begin{theorem}[Decision-boundary representation of the error-rate Hessian]\label{thm:risk-hessian}
Under \Cref{ass:boundaries}, with $p_Y$ continuous at the boundaries,
\begin{equation*}
 \left.\frac{\partial R(\widetilde\theta;\theta)}
 {\partial\widetilde\theta}\right|_{\widetilde\theta=\theta}=0,
\end{equation*}
and
\begin{equation}
 H_{R}(\theta)
 :=\left.\frac{\partial^2 R(\widetilde\theta;\theta)}
 {\partial\widetilde\theta\partial\widetilde\theta^\top}
 \right|_{\widetilde\theta=\theta}
 =\sum_{j=1}^K
 \frac{p_Y(y_j;\theta)}{2|\delta_y(y_j;\theta)|}
 g(y_j;\theta)g(y_j;\theta)^\top.
 \label{eq:risk-hessian}
\end{equation}
Let $\widehat\theta$ satisfy
\begin{equation}
 \sqrt n(\widehat\theta-\theta)\xrightarrow{d}N(0,V),
 \qquad
 n\E\{(\widehat\theta-\theta)(\widehat\theta-\theta)^\top\}\longrightarrow V,
 \label{eq:moment-convergence}
\end{equation}
and suppose that, for some $\epsilon>0$,
\begin{equation}
 \E\|\sqrt n(\widehat\theta-\theta)\|^{2+\epsilon}=O(1).
 \label{eq:higher-moment-condition}
\end{equation}
Then
\begin{equation}
 \E\{R(\widehat\theta;\theta)\}-\err(\theta)
 =\frac{1}{4n}\sum_{j=1}^K
 \frac{p_Y(y_j;\theta)}{|\delta_y(y_j;\theta)|}
 g(y_j;\theta)^\top Vg(y_j;\theta)
 +o(n^{-1}).
 \label{eq:excess-risk-generalV}
\end{equation}
For a correctly specified regular maximum-likelihood estimator satisfying the stated moment conditions,
$V=\J(\theta)^{-1}$, so
\begin{equation}
 \E\{R(\widehat\theta;\theta)\}-\err(\theta)
 =\frac{1}{4n}\sum_{j=1}^K
 \frac{p_Y(y_j;\theta)}{|\delta_y(y_j;\theta)|}
 g(y_j;\theta)^\top\J(\theta)^{-1}g(y_j;\theta)
 +o(n^{-1}).
 \label{eq:excess-risk}
\end{equation}
For the estimator that ignores the missingness mechanism, the same specialisation holds with $\J=\J_{\mathrm{ig}}$ by \Cref{lem:ig-information}.
\end{theorem}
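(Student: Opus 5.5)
The plan is to express $R(\widetilde\theta;\theta)$ through the perturbed boundaries $y_j(\widetilde\theta)$ supplied by \Cref{ass:boundaries}, differentiate twice at $\widetilde\theta=\theta$, and then take expectations of a second-order Taylor expansion.

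\textbf{Step 1: write $R$ as a sum over intervals.} On the stable neighbourhood, the roots $y_1(\widetilde\theta)<\cdots<y_K(\widetilde\theta)$ split $(0,\infty)$ into intervals on which the sign pattern of $\delta(\cdot;\widetilde\theta)$ is fixed. Writing $h(y)=\pi_1f_1(y)-\pi_2f_2(y)$, the error rate equals a constant plus a signed sum of terms $\pm\int_0^{y_j(\widetilde\theta)}h(y)\dd y$.

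\textbf{Step 2: first derivative.} Differentiating each term gives
\[
\partial R/\partial\widetilde\theta=\sum_j \epsilon_j\,h(y_j(\widetilde\theta))\,\partial y_j/\partial\widetilde\theta,
\]
where $\epsilon_j=\pm1$. The sign is chosen so that $\epsilon_j h$ is increasing through the boundary in the direction that makes misclassification worse; equivalently, $\epsilon_j=-\operatorname{sign}\delta_y(y_j;\theta)$ with the appropriate orientation. At $\widetilde\theta=\theta$ we have $h(y_j)=0$, because $\pi_1f_1=\pi_2f_2$ at a Bayes boundary. This gives the vanishing gradient.

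\textbf{Step 3: Hessian.} Differentiating again at $\widetilde\theta=\theta$, every term involving $h(y_j)$ vanishes, leaving
\[
\sum_j\epsilon_j h'(y_j)\,\nabla y_j\nabla y_j^\top .
\]
Two identities then finish the computation.
\begin{itemize}
\item By implicit differentiation of $\delta(y_j(\widetilde\theta);\widetilde\theta)=0$, we have $\nabla y_j=-g(y_j)/\delta_y(y_j)$.
\item Since $h=\pi_2f_2(e^\delta-1)$, its derivative at a root is $h'(y_j)=\pi_2f_2(y_j)\delta_y(y_j)$. Also $\pi_2f_2(y_j)=p_Y(y_j)/2$, because the two weighted densities are equal there.
\end{itemize}
Substituting gives the term $\tfrac{p_Y(y_j)}{2}\,|\delta_y|\,g g^\top/\delta_y^2$. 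The sign bookkeeping makes $\epsilon_j\delta_y=|\delta_y|$, because the excess error is nonnegative near the minimiser. This yields \eqref{eq:risk-hessian}.

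\textbf{Step 4: expected excess error.} Expand
\[
R(\widehat\theta;\theta)-\err(\theta)=\tfrac12(\widehat\theta-\theta)^\top H_R(\widehat\theta-\theta)+\rho_n ,
\]
where $|\rho_n|\le \omega(\|\widehat\theta-\theta\|)\,\|\widehat\theta-\theta\|^2$ on the neighbourhood, with $\omega(t)\to0$ by continuity of the Hessian. Taking expectations, the main term equals $\tfrac1{2n}\tr\{H_R\,n\E(\widehat\theta-\theta)(\widehat\theta-\theta)^\top\}\to \tfrac1{2n}\tr(H_RV)$, which gives \eqref{eq:excess-risk-generalV}.

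The remainder is the \textbf{main obstacle}, and I would handle it in two pieces.
\begin{itemize}
\item \emph{Outside the neighbourhood.} Since $0\le R\le1$, this contribution is at most $\Prb(\|\widehat\theta-\theta\|>c)\le \E\|\sqrt n(\widehat\theta-\theta)\|^{2+\epsilon}/(c\sqrt n)^{2+\epsilon}=o(n^{-1})$ by Markov's inequality and \eqref{eq:higher-moment-condition}.
\item \emph{Inside the neighbourhood.} The sequence $n\|\widehat\theta-\theta\|^2$ is uniformly integrable, by the $2+\epsilon$ moment bound, and $\omega(\|\widehat\theta-\theta\|)\to0$ in probability. Hence $n\E|\rho_n|\to0$.
\end{itemize}

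\textbf{Step 5: specialisations.} Substituting $V=\J^{-1}$ for the correctly specified MLE gives \eqref{eq:excess-risk}. For the ignoring estimator, \Cref{lem:ig-information} supplies $V=\J_{\mathrm{ig}}^{-1}$ via \eqref{eq:ig-asymptotic-normality}. The moment conditions are assumed, not derived.
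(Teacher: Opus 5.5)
Your proposal is correct. For the gradient and Hessian it takes a different route from the paper, while your Step 4 (Markov's inequality off a neighbourhood, uniform integrability of $n\|\widehat\theta-\theta\|^2$ on it) is essentially the paper's argument. The paper never differentiates $R$. It writes the excess error as $\int_0^\infty|h(y)|\,\mathbf 1\{\operatorname{sign}\delta(y;\widetilde\theta)\neq\operatorname{sign}\delta(y;\theta)\}\dd y$ and notes that near $\theta$ the disagreement set is the union of the intervals between $y_j$ and $y_j(\widetilde\theta)$. It then integrates the linearisation $|h(y)|=\tfrac12\,p_Y(y_j)|\delta_y(y_j)|\,|y-y_j|+o(|y-y_j|)$ over each interval. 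This produces the expansion $R-\err=\tfrac12(\widetilde\theta-\theta)^\top H_R(\widetilde\theta-\theta)+o(\|\widetilde\theta-\theta\|^2)$ directly, with $|\delta_y|$ and nonnegativity automatic, and it needs no smoothness of $p_Y$ beyond continuity at the boundaries.

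You instead write $R$ as a constant plus a signed sum of $\int_0^{y_j(\widetilde\theta)}h$ and differentiate twice. The zero gradient then appears as an envelope-type consequence of $h(y_j)=0$, and $H_R$ comes out as a genuine second derivative, which is what the theorem's definition of $H_R$ presupposes. A Peano expansion alone identifies only the quadratic coefficient, although that is all the expectation step uses. Your route costs sign bookkeeping and a little more smoothness. The extra smoothness is harmless here, since the Weibull densities are smooth and $y_j(\cdot)$ is $C^2$ by the implicit function theorem, so $R$ is $C^2$ and your appeal to continuity of the Hessian is justified.

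Two points need tightening.

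\textbf{Signs.} Your signs are internally inconsistent. With $R=\pi_2+\int_{\{\delta(\cdot;\widetilde\theta)<0\}}h$, the root $y_j$ is a right endpoint of a component of $\{\delta(\cdot;\widetilde\theta)<0\}$ exactly when $\delta_y(y_j;\theta)>0$. Hence $\epsilon_j=+\operatorname{sign}\delta_y(y_j;\theta)$, not $-\operatorname{sign}\delta_y$ as you state, and this is what gives $\epsilon_j\delta_y=|\delta_y|$. Derive it this way rather than from nonnegativity of the excess error. Nonnegativity only makes $\sum_j\epsilon_j h'(y_j)\nabla y_j\nabla y_j^\top$ positive semidefinite, and it pins down the individual $\epsilon_j$ only if the $g(y_j)$ are linearly independent.

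\textbf{Remainder off the neighbourhood.} Outside the neighbourhood your remainder $\rho_n$ contains the quadratic term $\tfrac12(\widehat\theta-\theta)^\top H_R(\widehat\theta-\theta)$ as well as $R-\err\in[0,1]$. The Markov bound handles only the latter, so you also need $\E[\|\widehat\theta-\theta\|^2\mathbf 1\{\|\widehat\theta-\theta\|>c\}]=o(n^{-1})$. This follows from the same uniform integrability, and the paper states it explicitly.
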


The weight $p_Y(y_j)/|\delta_y(y_j)|$ measures the probability mass near boundary $y_j$ relative to the local steepness of the log-posterior odds. The quadratic form measures uncertainty in the estimated log-posterior odds at that boundary.

\subsection{Asymptotic relative efficiency}
For an estimator $s\in\{\mathrm{CC},\mathrm{ig},\mathrm{full}\}$, define the leading excess-error coefficient
\begin{equation}
 C_s(\theta)
 =\frac14\sum_{j=1}^K\omega_j
 g_j^\top V_sg_j,
 \qquad
 \omega_j=\frac{p_Y(y_j;\theta)}{|\delta_y(y_j;\theta)|},
 \qquad
 g_j=g(y_j;\theta),
 \label{eq:Cs}
\end{equation}
so that $\E\{R(\widehat\theta_s;\theta)\}-\err(\theta) =C_s(\theta)/n+o(n^{-1})$. Under the usual regularity conditions for the completely classified and full maximum-likelihood estimators, together with \Cref{lem:ig-information} for the ignoring estimator, the asymptotic covariance matrices are
\begin{equation*}
 V_{\mathrm{CC}}=\J_{\mathrm{CC}}^{-1},
 \qquad
 V_{\mathrm{ig}}=\J_{\mathrm{ig}}^{-1},
 \qquad
 V_{\mathrm{full}}=\J_{\mathrm{full}}^{-1}.
\end{equation*}
For $s\in\{\mathrm{ig},\mathrm{full}\}$, define the asymptotic relative efficiency with respect to the estimator based on the completely classified sample by
\begin{equation}
 \ARE_{s:\mathrm{CC}}(\theta)
 =\frac{C_{\mathrm{CC}}(\theta)}{C_s(\theta)}.
 \label{eq:ARE-def}
\end{equation}
Thus, $\ARE_{s:\mathrm{CC}}>1$ means that estimator $s$ has a smaller first-order expected error rate above the Bayes error than the estimator based on the completely classified sample. The main theoretical comparison below takes $s=\mathrm{full}$.

\begin{corollary}[Common-shape asymptotic relative efficiency]
\label{cor:ARE-common}
Assume the conditions of \Cref{thm:risk-hessian}. Under a common-shape two-component Weibull mixture, suppose $AB<0$, so that Bayes' rule has the unique simple decision boundary $y_0$. For the $\mathrm{full}$ estimator relative to the estimator based on the completely classified sample,
\begin{equation}
\ARE_{\mathrm{full}:\mathrm{CC}}^{\mathrm{C}}
=
\frac{
g_C(y_0)^\top
\J_{\mathrm{CC},C}^{-1}
g_C(y_0)
}{
g_C(y_0)^\top
\J_{\mathrm{full},C}^{-1}
g_C(y_0)
}.
\label{eq:ARE-common}
\end{equation}
\end{corollary}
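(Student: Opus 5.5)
The plan is to specialise the excess-error coefficient \eqref{eq:Cs} to the common-shape geometry with $K=1$ and then form the ratio in \eqref{eq:ARE-def}.

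First, I will check that \Cref{thm:risk-hessian} applies. With $k_1=k_2=k$ we have $\delta_C(y)=A+By^k$. When $AB<0$, the unique positive root is $y_0=(-A/B)^{1/k}$. Its derivative there is
\[
\delta_y(y_0)=kBy_0^{k-1}\neq0,
\]
so the root is simple. Since $A$ and $B$ depend smoothly on $\theta_C$ and stay bounded away from zero nearby, the condition $AB<0$ persists in a neighbourhood of $\theta_C$. Hence the single root $y_0(\widetilde\theta)$ is a differentiable function of $\widetilde\theta$ and cannot escape to $0$ or $\infty$, so \Cref{ass:boundaries} holds with $K=1$. Continuity of $p_Y$ at $y_0$ is immediate from the Weibull densities.

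Next, I will apply \eqref{eq:excess-risk-generalV} with $K=1$ and $g=g_C$. For each $s\in\{\mathrm{CC},\mathrm{full}\}$ this gives
\[
C_s(\theta_C)=\tfrac14\,\omega_0\, g_C(y_0)^\top V_s g_C(y_0),
\qquad
\omega_0=\frac{p_Y(y_0)}{|\delta_y(y_0)|}.
\]
For the asymptotic covariances I will use $V_{\mathrm{CC}}=\J_{\mathrm{CC},C}^{-1}$, the inverse of the explicit block matrix from \Cref{sec:information}. For the full likelihood I will use $V_{\mathrm{full}}=\J_{\mathrm{full},C}^{-1}$, with $\J_{\mathrm{full},C}$ taken from \Cref{thm:information}: it is the $\theta$-block of the inverse joint information after profiling out $\xi$, which is exactly the Schur-complement form \eqref{eq:main-info-decomp}. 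Taking the ratio, the common factor $\tfrac14\omega_0$ is positive (because $p_Y(y_0)>0$ and $\delta_y(y_0)\ne0$), so it cancels and leaves \eqref{eq:ARE-common}.

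The only delicate step is justifying that the full estimator of $\theta$ satisfies the moment conditions \eqref{eq:moment-convergence} and \eqref{eq:higher-moment-condition} with $V$ equal to the $\theta$-marginal of the joint inverse information, i.e.\ $\J_{\mathrm{full}}^{-1}$. I would take this directly from the assumed regularity conditions in \Cref{thm:risk-hessian} together with \Cref{thm:information}, rather than reprove it. I also need the denominator to be nonzero. That holds because $g_C(y_0)\neq0$ (its first entry is $1$) and $\J_{\mathrm{full},C}$ is positive definite.
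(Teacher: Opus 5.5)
Your proposal is correct and matches the paper's proof: set $K=1$ in \eqref{eq:Cs}, substitute $V_{\mathrm{CC}}=\J_{\mathrm{CC},C}^{-1}$ and $V_{\mathrm{full}}=\J_{\mathrm{full},C}^{-1}$ into \eqref{eq:ARE-def}, and cancel the positive factor $\omega_0/4$. Your extra checks (simplicity and local stability of $y_0$, a nonzero denominator) are sound but already covered by the assumed conditions of \Cref{thm:risk-hessian}. One wording fix: the $\theta$-block of the inverse joint information is $\J_{\mathrm{full},C}^{-1}$, not $\J_{\mathrm{full},C}$; the latter is the Schur complement itself.
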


\begin{proof}
For the common-shape model, $K=1$ in \eqref{eq:Cs}. Substituting $V_{\mathrm{CC}}=\J_{\mathrm{CC},C}^{-1}$ and $V_{\mathrm{full}}=\{\J_{\mathrm{full},C}\}^{-1}$ into \eqref{eq:ARE-def}, the common boundary factor $\omega_0/4$ cancels, yielding \eqref{eq:ARE-common}.
\end{proof}

The subscript $C$ denotes the common-shape parametrisation. The information matrices in \eqref{eq:ARE-common} must be obtained from the constrained common-shape likelihood; they are not obtained by setting $\nu_1=\nu_2$ after inverting the unequal-shape information matrix.

\begin{corollary}[Unequal-shape asymptotic relative efficiency]
\label{cor:ARE-unequal}
Assume the conditions of \Cref{thm:risk-hessian}. Under an unequal-shape two-component Weibull mixture, suppose the Bayes' rule has two simple decision boundaries $0<y_1<y_2$. For the $\mathrm{full}$ estimator relative to the estimator based on the completely classified sample,
\begin{equation}
\ARE_{\mathrm{full}:\mathrm{CC}}^{\mathrm{U}}
=
\frac{
\displaystyle
\sum_{j=1}^{2}
\omega_j\,
g_U(y_j)^\top
\J_{\mathrm{CC},U}^{-1}
g_U(y_j)
}{
\displaystyle
\sum_{j=1}^{2}
\omega_j\,
g_U(y_j)^\top
\J_{\mathrm{full},U}^{-1}
g_U(y_j)
}.
\label{eq:ARE-unequal}
\end{equation}
\end{corollary}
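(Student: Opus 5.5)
The plan is to argue as in the proof of \Cref{cor:ARE-common}, specialising \Cref{thm:risk-hessian} to $K=2$. The first step is to check that \Cref{ass:boundaries} holds in the two-root unequal-shape setting. \Cref{prop:unequal-boundary} shows that when $k_1\neq k_2$ and two simple roots exist, $\delta$ has a unique interior extremum, with $\delta\to\mp\infty$ at both ends. Because the roots are simple, $\delta_y(y_j;\theta_U)\neq0$ for $j=1,2$, and the extremum value is strictly separated from zero. By the implicit function theorem, each root $y_j(\widetilde\theta)$ is then a differentiable function on a neighbourhood of $\theta_U$, and the number of roots stays equal to two. The roots cannot escape to $0$ or $\infty$, since the sign of $\delta$ near both ends is fixed locally by the sign of $k_1-k_2$. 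The assumption is also imposed directly in the statement, so this step only confirms consistency.

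Next, I would apply \eqref{eq:Cs} with $K=2$ and $g_j=g_U(y_j)$. This gives
\[
C_s(\theta_U)=\tfrac14\sum_{j=1}^2\omega_j\,g_U(y_j)^\top V_s\,g_U(y_j),
\qquad s\in\{\mathrm{CC},\mathrm{full}\}.
\]
I would then substitute $V_{\mathrm{CC}}=\J_{\mathrm{CC},U}^{-1}$, the block-diagonal inverse obtained from $\diag\{\pi_1\pi_2,\pi_1W(k_1),\pi_2W(k_2)\}$. For the full estimator I would substitute $V_{\mathrm{full}}=\J_{\mathrm{full},U}^{-1}$, the adjusted information from \Cref{thm:information}. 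Both substitutions require that the respective maximum-likelihood estimators satisfy the moment conditions \eqref{eq:moment-convergence}--\eqref{eq:higher-moment-condition}, which are part of the hypotheses of \Cref{thm:risk-hessian}.

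Forming the ratio \eqref{eq:ARE-def}, the common factor $1/4$ cancels. Unlike the common-shape case, the weights $\omega_j=p_Y(y_j)/|\delta_y(y_j)|$ differ between the two boundaries and therefore do not cancel. They are, however, the same in the numerator and the denominator, because they depend only on the data-generating $\theta_U$ and not on the estimator. The result is \eqref{eq:ARE-unequal}.

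The only point needing care is to confirm that the denominator is strictly positive. This holds because $\J_{\mathrm{full},U}$ is positive definite, $\omega_j>0$ (since $p_Y>0$ on $(0,\infty)$), and $g_U(y_j)\neq0$, as its first component equals $1$.
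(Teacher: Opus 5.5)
Your proposal is correct and follows the paper's proof: set $K=2$ in \eqref{eq:Cs}, substitute $V_{\mathrm{CC}}=\J_{\mathrm{CC},U}^{-1}$ and $V_{\mathrm{full}}=\J_{\mathrm{full},U}^{-1}$ into \eqref{eq:ARE-def}, cancel the factor $1/4$, and note that the boundary weights $\omega_j$ do not cancel but are common to numerator and denominator. Your extra checks go slightly beyond the paper's two-line proof but are consistent with it: verifying \Cref{ass:boundaries} via \Cref{prop:unequal-boundary} and the implicit function theorem is not strictly needed, since that assumption is imposed by hypothesis, and you also confirm that the denominator is strictly positive.
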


\begin{proof}
For the unequal-shape model, $K=2$ in \eqref{eq:Cs}. Substitution of the covariance matrices for the completely classified and full-likelihood estimators into \eqref{eq:ARE-def} gives \eqref{eq:ARE-unequal}. Unlike the single-boundary case, the two boundary weights generally do not cancel.
\end{proof}

The positive-definiteness condition in \eqref{eq:info-dominance} is sufficient for an ARE above one, but it is stronger than necessary: only covariance in the parameter directions that perturb the Bayes decision boundaries enters the first-order error-rate comparison.

\section{Numerical simulations}
\label{sec:numerical}

\subsection{Simulation setup}
\label{subsec:simulation-setup}

We conducted numerical simulations for six two-component Weibull mixture models with equal class probabilities, $\pi_1=\pi_2=0.5$, sample size $n=500$, and $B=500$ Monte Carlo replications. \Cref{tab:simulation-designs} summarises the data-generating models. Designs C1--C3 impose a common Weibull shape parameter and therefore have one positive Bayes decision boundary, whereas designs U1--U3 allow unequal shape parameters and have two.

\begin{table}[!ht]
\centering
\caption{Weibull-mixture models used in the numerical simulations.}
\label{tab:simulation-designs}
\begin{tabular}{@{}llcccc@{}}
\toprule
Design
& Shape model
& $(\lambda_1,\lambda_2)$
& $(k_1,k_2)$
& Bayes error
& Decision boundary or boundaries \\
\midrule
C1 & Common  & $(1,3)$   & $(1,1)$     & 0.3075
   & 1.6479 \\
C2 & Common  & $(1,3)$   & $(1.5,1.5)$ & 0.2274
   & 1.6088 \\
C3 & Common  & $(1,3)$   & $(3,3)$     & 0.0758
   & 1.5070 \\
U1 & Unequal & $(1,1.5)$ & $(0.7,1.5)$ & 0.3266
   & 0.4882,\ 3.4944 \\
U2 & Unequal & $(1,1.5)$ & $(1,2)$     & 0.3304
   & 0.6957,\ 2.9855 \\
U3 & Unequal & $(1,1.5)$ & $(1.5,3)$   & 0.3051
   & 0.9185,\ 2.5641 \\
\bottomrule
\end{tabular}

\medskip
\begin{minipage}{0.95\linewidth}
\footnotesize
\textit{Note.}
The common-shape models have one positive decision boundary, whereas the unequal-shape models have two positive simple decision boundaries.
\end{minipage}
\end{table}

In each simulation setting, class labels were removed according to
\begin{equation*}
\logit\{q(y;\theta,\xi)\}
=
\xi_0+\xi_1\delta(y;\theta)^2,
\end{equation*}
where $\delta(y;\theta)$ is the log-posterior odds defined in
\eqref{eq:delta-general}.
Because $\delta(y;\theta)=0$ at each Bayes decision boundary, $\delta(y;\theta)^2$ provides a smooth measure of classification certainty and attains its minimum at the boundaries. We considered
\[
\xi_1\in\{0,-0.5,-1,-2\}.
\]
For $\xi_1<0$, label missingness is MAR because the missing-label probability depends on the observed feature; more negative values correspond to stronger concentration of missing labels near the Bayes decision boundaries. The value $\xi_1=0$ provides the constant-probability benchmark. For each value of $\xi_1$, the intercept $\xi_0$ was calibrated to satisfy
\begin{equation*}
\E\{q(Y;\theta,\xi)\}
=
\gamma,
\qquad
\gamma\in\{0.10,0.30,0.50\}.
\end{equation*}
The resulting expected missing-label proportions were 10\%, 30\%, and 50\%. The negative values of $\xi_1$ therefore generate increasingly feature-dependent MAR label missingness. The simulation study is intended to assess finite-sample performance rather than to provide a numerical verification of the asymptotic ARE formulas. The six Weibull models, four values of $\xi_1$, and three missing-label proportions yielded 72 simulation settings.

We compared three estimators: $\mathrm{CC}$, obtained from the likelihood for the completely classified sample in \eqref{eq:cclik}; $\mathrm{ig}$, obtained from the likelihood in Section~\ref{sec:likelihood} that ignores the label-missingness mechanism; and $\mathrm{full}$, obtained from the full likelihood in \eqref{eq:fulllik}, which jointly estimates $\theta$ and $(\xi_0,\xi_1)$. For each fitted model, we evaluated the error rate of the plug-in rule directly under the data-generating distribution, avoiding Monte Carlo error from a finite test sample.

For method $s\in\{\mathrm{CC},\mathrm{ig},\mathrm{full}\}$, the mean excess error rate was defined as
\begin{equation*}
\overline{E}_s
=
\frac{1}{B}
\sum_{b=1}^{B}
\left[
R\left(
\widehat{\theta}_s^{(b)};\theta
\right)
-
\err(\theta)
\right].
\end{equation*}
The empirical relative efficiency of method $s$, relative to the estimator based on the completely classified sample, was calculated as
\begin{equation*}
\widehat{\RE}_{s:\mathrm{CC}}
=
\frac{\overline{E}_{\mathrm{CC}}}
{\overline{E}_s}.
\end{equation*}
A value greater than one indicates that method $s$ has a smaller mean excess error rate than the benchmark based on the completely classified sample.

\begin{remark}[Computation]
Each likelihood was maximised numerically from five starting values. Among the runs returning a finite objective value, the solution with the largest log likelihood was retained. A fit was classified as converged when the numerical optimiser returned a successful convergence status. Convergence rates ranged from 98\% to 100\% across all methods and simulation settings, with most settings attaining 100\% convergence. Nonconverged and failed fits were retained in the simulation records but excluded from the Monte Carlo summaries, which were calculated from the successfully converged replications.
\end{remark}

\subsection{Expected error-rate efficiency}
\label{subsec:classification-performance}

\Cref{fig:relative-efficiency} summarises the empirical relative efficiencies across the six designs, and \Cref{tab:re-common,tab:re-unequal} report the corresponding numerical values. When $\xi_1=0$, the $\mathrm{full}$ and $\mathrm{ig}$ estimators perform similarly. As the MAR dependence on the observed feature strengthens with decreasing $\xi_1$, the $\mathrm{full}$ estimator shows progressively larger gains over the $\mathrm{ig}$ estimator.

The gain from modelling the missing-label mechanism becomes more pronounced as the missing-label proportion increases. Under the common-shape designs, the $\mathrm{full}$ estimator attains empirical relative efficiencies above six in several settings with strong MAR dependence on the observed feature. The unequal-shape designs exhibit the same qualitative pattern, although the gains are generally more moderate. By contrast, the empirical relative efficiency of the $\mathrm{ig}$ estimator remains below one in nearly all settings and decreases as the missing-label proportion increases.

\begin{figure}[!ht]
\centering
\includegraphics[width=\textwidth]
{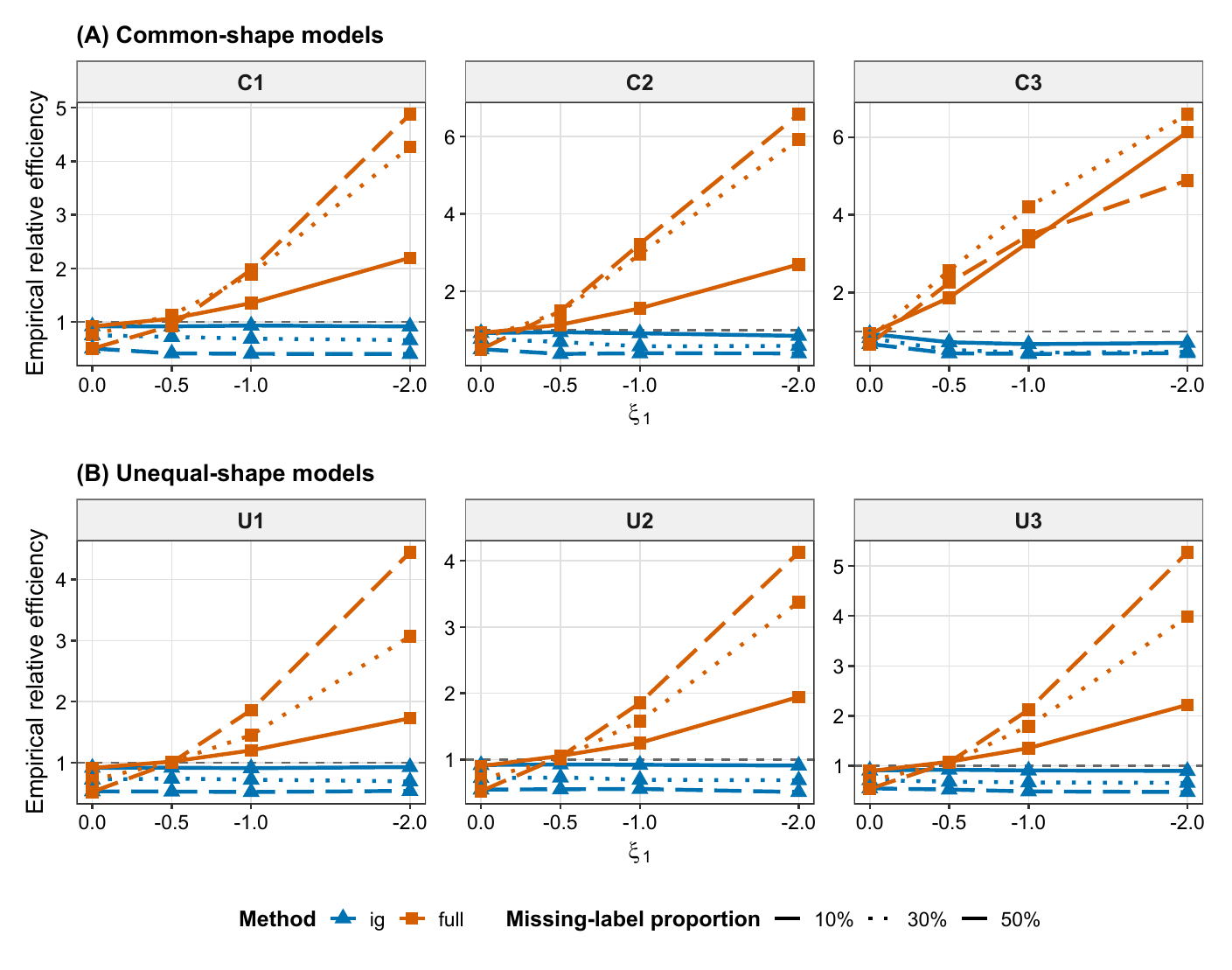}
\caption{
Empirical relative efficiency relative to the estimator based on the completely classified sample (CC). Panel (A) shows the common-shape designs and Panel (B) the unequal-shape designs. Blue and orange curves correspond to the $\mathrm{ig}$ and $\mathrm{full}$ estimators, respectively. Line types indicate the missing-label proportions (10\%, 30\%, and 50\%). The horizontal dashed line marks relative efficiency equal to one.
}
\label{fig:relative-efficiency}
\end{figure}

\begin{table}[!ht]
\centering
\caption{Empirical relative efficiencies under the common-shape Weibull models.}
\label{tab:re-common}
\begin{tabular}{@{}ccrrrrrr@{}}
\toprule
&
&
\multicolumn{2}{c}{10\% missing}
&
\multicolumn{2}{c}{30\% missing}
&
\multicolumn{2}{c}{50\% missing} \\
\cmidrule(lr){3-4}
\cmidrule(lr){5-6}
\cmidrule(lr){7-8}
Design
& $\xi_1$
& $\mathrm{ig}$ & $\mathrm{full}$
& $\mathrm{ig}$ & $\mathrm{full}$
& $\mathrm{ig}$ & $\mathrm{full}$ \\
\midrule
\multirow{4}{*}{C1}
& 0.0
& 0.918 & 0.917
& 0.752 & 0.757
& 0.503 & 0.500 \\
& -0.5
& 0.916 & 1.060
& 0.719 & 1.129
& 0.411 & 0.929 \\
& -1.0
& 0.931 & 1.350
& 0.686 & 1.883
& 0.404 & 1.973 \\
& -2.0
& 0.915 & 2.196
& 0.659 & 4.269
& 0.401 & 4.877 \\
\midrule
\multirow{4}{*}{C2}
& 0.0
& 0.925 & 0.925
& 0.774 & 0.775
& 0.506 & 0.505 \\
& -0.5
& 0.945 & 1.138
& 0.692 & 1.466
& 0.384 & 1.493 \\
& -1.0
& 0.916 & 1.562
& 0.581 & 2.959
& 0.401 & 3.228 \\
& -2.0
& 0.851 & 2.699
& 0.591 & 5.931
& 0.397 & 6.579 \\
\midrule
\multirow{4}{*}{C3}
& 0.0
& 0.941 & 0.941
& 0.831 & 0.831
& 0.680 & 0.680 \\
& -0.5
& 0.726 & 1.870
& 0.528 & 2.569
& 0.440 & 2.259 \\
& -1.0
& 0.676 & 3.292
& 0.457 & 4.210
& 0.425 & 3.473 \\
& -2.0
& 0.709 & 6.131
& 0.485 & 6.586
& 0.444 & 4.886 \\
\bottomrule
\end{tabular}
\end{table}

\begin{table}[!ht]
\centering
\caption{Empirical relative efficiencies under the unequal-shape Weibull models.}
\label{tab:re-unequal}
\begin{tabular}{@{}ccrrrrrr@{}}
\toprule
&
&
\multicolumn{2}{c}{10\% missing}
&
\multicolumn{2}{c}{30\% missing}
&
\multicolumn{2}{c}{50\% missing} \\
\cmidrule(lr){3-4}
\cmidrule(lr){5-6}
\cmidrule(lr){7-8}
Design
& $\xi_1$
& $\mathrm{ig}$ & $\mathrm{full}$
& $\mathrm{ig}$ & $\mathrm{full}$
& $\mathrm{ig}$ & $\mathrm{full}$ \\
\midrule
\multirow{4}{*}{U1}
& 0.0
& 0.918 & 0.916
& 0.727 & 0.718
& 0.533 & 0.524 \\
& -0.5
& 0.919 & 1.019
& 0.741 & 1.017
& 0.529 & 1.004 \\
& -1.0
& 0.912 & 1.202
& 0.723 & 1.451
& 0.521 & 1.862 \\
& -2.0
& 0.929 & 1.729
& 0.696 & 3.070
& 0.540 & 4.443 \\
\midrule
\multirow{4}{*}{U2}
& 0.0
& 0.917 & 0.901
& 0.720 & 0.706
& 0.547 & 0.524 \\
& -0.5
& 0.925 & 1.054
& 0.729 & 1.013
& 0.551 & 1.041 \\
& -1.0
& 0.920 & 1.251
& 0.694 & 1.582
& 0.555 & 1.853 \\
& -2.0
& 0.909 & 1.945
& 0.687 & 3.378
& 0.509 & 4.127 \\
\midrule
\multirow{4}{*}{U3}
& 0.0
& 0.907 & 0.894
& 0.711 & 0.703
& 0.549 & 0.540 \\
& -0.5
& 0.920 & 1.081
& 0.682 & 1.060
& 0.524 & 1.069 \\
& -1.0
& 0.907 & 1.353
& 0.667 & 1.792
& 0.487 & 2.121 \\
& -2.0
& 0.896 & 2.220
& 0.657 & 3.987
& 0.475 & 5.273 \\
\bottomrule
\end{tabular}
\end{table}

\subsection{Parameter estimation}
\label{subsec:parameter-estimation}

\Cref{fig:parameter-rmse} provides complementary evidence from parameter estimation. It reports empirical root mean squared errors for the mixing proportion and the Weibull scale and shape parameters. For visual clarity, the RMSEs are averaged over designs C1--C3 within the common-shape family and over designs U1--U3 within the unequal-shape family. At $\xi_1=0$, the $\mathrm{full}$ and $\mathrm{ig}$ estimators have broadly similar accuracy. As the MAR dependence on classification uncertainty strengthens, the $\mathrm{full}$ estimator generally has smaller RMSE, particularly at the larger missing-label proportions.

These reductions in parameter-estimation RMSE occur in both model families. The $\mathrm{CC}$ estimator provides a baseline reference, while the $\mathrm{full}$ estimator often approaches or improves upon the $\mathrm{CC}$ RMSE when MAR label missingness is concentrated near the decision boundary.

\begin{figure}[!ht]
\centering
\includegraphics[width=\textwidth]
{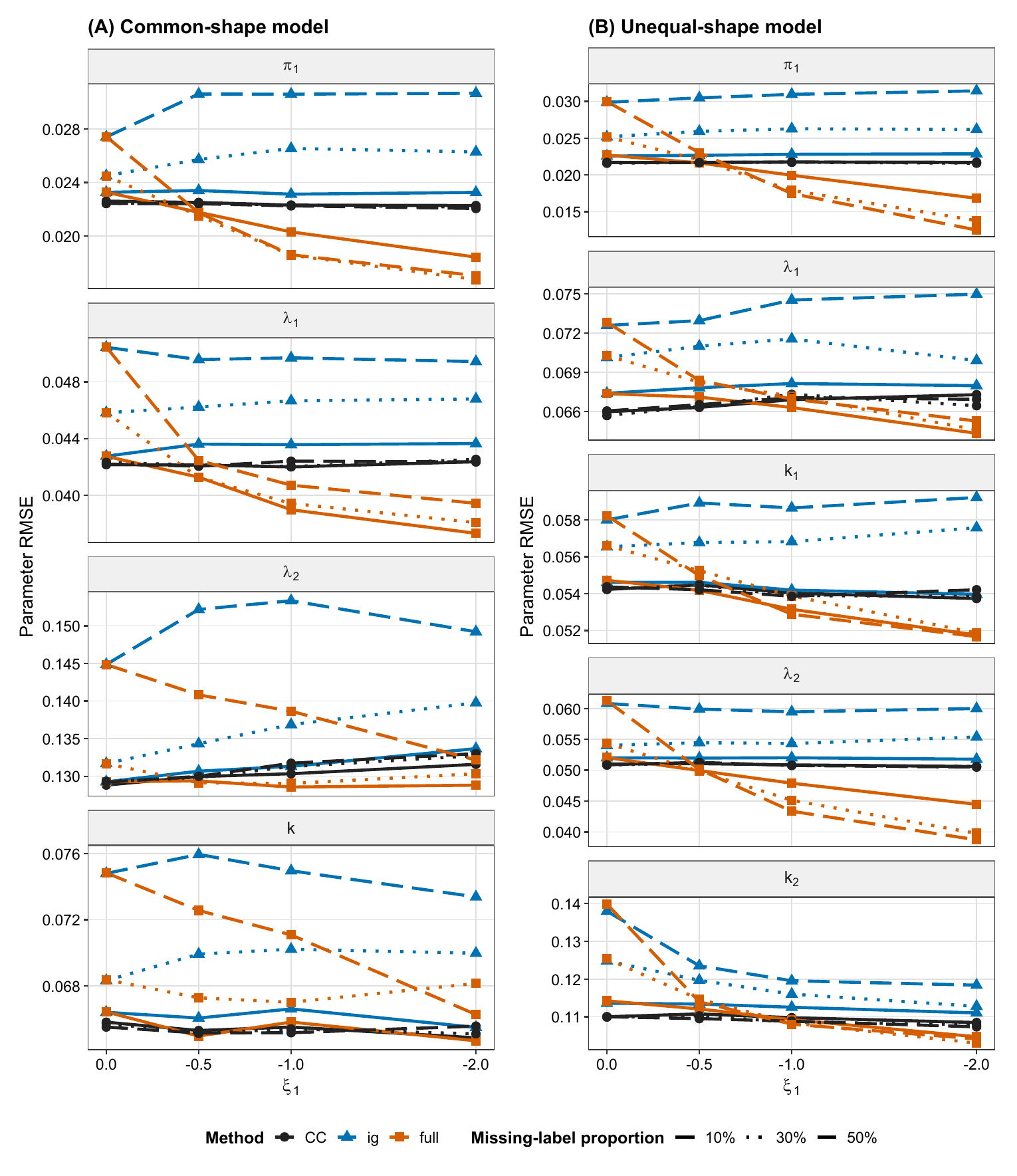}
\caption{
RMSEs of the estimated model parameters.
Panel (A) summarises the common-shape models, displaying the mixing proportion,
two scale parameters, and the common shape parameter.
Panel (B) summarises the unequal-shape models, displaying the mixing proportion,
two scale parameters, and two component-specific shape parameters.
RMSEs are averaged over the three designs within each shape-model family.
Black, blue, and orange curves correspond to the $\mathrm{CC}$,
$\mathrm{ig}$, and $\mathrm{full}$ estimators, respectively.
Line types indicate the missing-label proportions.
}
\label{fig:parameter-rmse}
\end{figure}

\subsection{Decision-boundary estimation}
\label{subsec:boundary-estimation}

The primary inferential target is the plug-in classification rule rather than the model parameters alone. We therefore evaluated the RMSE of the estimated Bayes decision boundaries. \Cref{fig:boundary-rmse} indicates that the $\mathrm{full}$ estimator generally yields smaller boundary RMSE than the $\mathrm{ig}$ estimator. The difference increases as the MAR dependence on classification uncertainty strengthens and as the missing-label proportion rises.

The common-shape designs contain a single positive decision boundary, whereas the unequal-shape designs contain two. In the unequal-shape case, the $\mathrm{full}$ estimator reduces RMSE for both boundaries, with particularly clear gains for the second boundary under strong MAR label missingness. These results indicate that modelling the missing-label mechanism improves estimation of the classification rule, not merely estimation of the mixture parameters.

\begin{figure}[!ht]
\centering
\includegraphics[width=\textwidth]
{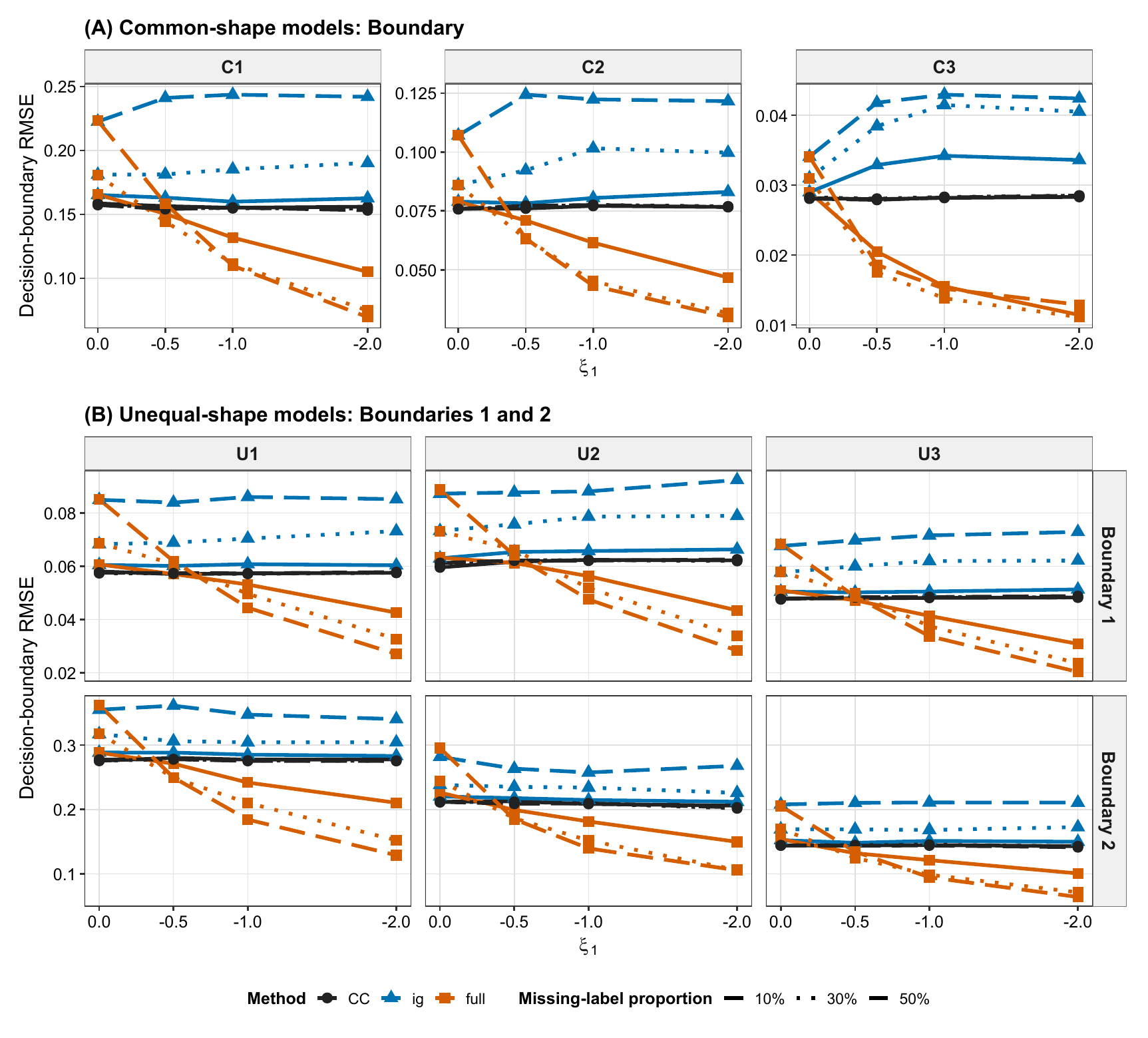}
\caption{
RMSEs of the estimated Bayes decision boundaries.
Panel (A) shows the single decision boundary for the common-shape models,
whereas Panel (B) shows the two decision boundaries for the unequal-shape models.
Black, blue, and orange curves correspond to the $\mathrm{CC}$,
$\mathrm{ig}$, and $\mathrm{full}$ estimators, respectively.
Line types indicate the missing-label proportions.
}
\label{fig:boundary-rmse}
\end{figure}

\section{A semi-synthetic study using hard-drive failure data}
\label{sec:hard-drive-classification}

We analysed Backblaze failure records for two frequently observed hard-drive product models~\cite{arslan2020distribution}. For drive $j$, the observed lifetime $Y_j$ was the positive-valued feature and $Z_j\in\{1,2\}$ denoted drive type, with \texttt{ST4000DM000} coded as class~1 and \texttt{ST8000DM002} as class~2. The objective was therefore to classify drive type among failed drives, not to predict future drive failure. After excluding the separate committee-training subset described in Appendix~\ref{app:hard-drive-experts} and two observations with zero recorded lifetime, the analysis sample contained $n=194$ drives: 98 from class~1 and 96 from class~2.

To emulate an expert-labelling setting in which ambiguous observations are more likely to remain unlabelled, we constructed three nested label-missingness regimes retaining approximately 60\%, 70\%, and 80\% of the class labels. These regimes represent stylised junior, intermediate, and senior levels of expert assessment. Labels were preferentially retained for observations that were easier to classify, whereas observations nearer the decision boundary were more likely to remain unlabelled. Details are given in Appendix~\ref{app:hard-drive-experts}.

Using the completely classified sample as the benchmark, we compared the common- and unequal-shape Weibull mixtures. BIC favoured the common-shape model ($2845.142$ versus $2847.951$), and the likelihood-ratio test did not reject the common-shape restriction ($\chi^2_1=2.459$, $p=0.117$). We therefore used the common-shape model in this study. At each expert level, we compared the completely classified estimator ($\mathrm{CC}$), the estimator that ignored the missingness mechanism ($\mathrm{ig}$), and the full-likelihood estimator ($\mathrm{full}$). The latter used the MAR working model in \eqref{eq:qmodel}, with missingness depending on the observed lifetime through the squared log-posterior odds.

Labels were more likely to be absent near the estimated Bayes decision boundary. The $\mathrm{CC}$ estimate was $\widehat y_{0,\mathrm{CC}}=287.94$, whereas the $\mathrm{ig}$ estimates were substantially lower. Incorporating the feature-dependent missing-label pattern moved the $\mathrm{full}$ estimates closer to the $\mathrm{CC}$ benchmark.

Predictive misclassification error was estimated by repeated stratified five-fold cross-validation. The same folds were used for all three expert regimes, so the cross-validated $\mathrm{CC}$ result was identical across regimes. The $\mathrm{full}$ estimator had the lowest mean estimated error rate at every expert level. Relative to $\mathrm{ig}$, the mean error rate decreased by $0.076$, $0.075$, and $0.053$ for the junior, intermediate, and senior experts, respectively; the corresponding decreases relative to $\mathrm{CC}$ were $0.036$, $0.036$, and $0.030$. The estimated decision boundaries and cross-validated error rates are summarised in Table~\ref{tab:hard-drive-results}.

\begin{table}[!ht]
\centering
\begin{threeparttable}
\caption{Estimated decision boundaries and mean estimated error rates.
Error rates were evaluated using five repetitions of five-fold cross-validation.}
\label{tab:hard-drive-results}
\small
\begin{tabular*}{\linewidth}{@{\extracolsep{\fill}}lrrrrrr@{}}
\toprule
& \multicolumn{3}{c}{Decision boundary}
& \multicolumn{3}{c}{Estimated error rate} \\
\cmidrule(lr){2-4}
\cmidrule(lr){5-7}
Expert
& $\mathrm{CC}$
& $\mathrm{ig}$
& $\mathrm{full}$
& $\mathrm{CC}$
& $\mathrm{ig}$
& $\mathrm{full}$ \\
\midrule
Junior
& 287.94 & 158.91 & 328.71
& 0.427 & 0.467 & \textbf{0.391} \\
Intermediate
& 287.94 & 150.83 & 343.38
& 0.427 & 0.466 & \textbf{0.391} \\
Senior
& 287.94 & 180.19 & 322.93
& 0.427 & 0.450 & \textbf{0.397} \\
\bottomrule
\end{tabular*}
\begin{tablenotes}[flushleft]
\footnotesize
\item Decision boundaries were estimated by fitting each method to the full analysis sample. Error rates are one minus the corresponding classification accuracies and are averaged over five repetitions of stratified five-fold cross-validation, with all models refitted in the corresponding training folds.
\end{tablenotes}
\end{threeparttable}
\end{table}

Therefore, modelling the feature-dependent missing-label mechanism produced decision-boundary estimates closer to the completely classified benchmark and reduced the cross-validated error rate in this example.

\section{Discussion}\label{sec:discussion}

We have studied semi-supervised classification from a two-component Weibull mixture under a shared-parameter MAR label-missingness model. The Weibull setting leads to one positive decision boundary under a nonconstant common-shape Bayes' rule and can lead to two under unequal shapes. After adjustment for nuisance parameters in the missingness model, the Fisher information and the expected error rate of the plug-in sample rule can both be expressed in forms that make the missing-label pattern explicit. The decision-boundary expansion shows why improved estimation of all model parameters is not required for a classification gain: the first-order error rate depends on uncertainty only in directions that perturb the decision boundaries. The simulations and the semi-synthetic analysis based on hard-drive failure data support this distinction. Extensions to nonregular boundary configurations, censored Weibull data, more flexible missingness models, and multiclass settings remain useful directions for further work \citep{ahfock2023,wu2026}.

\appendix

\section{Decision-boundary proof}\label{app:roots}
\begin{proof}[Proof of {\color{blue}Proposition 1}]
Set $t=\log y$ and $\phi(t)=\delta(e^t;\theta_U)$. Then
\begin{align*}
 \phi'(t)
 &=k_1-k_2-k_1\exp\{k_1(t-\alpha_1)\}
 +k_2\exp\{k_2(t-\alpha_2)\},\\
 \phi''(t)
 &=-k_1^2\exp\{k_1(t-\alpha_1)\}
 +k_2^2\exp\{k_2(t-\alpha_2)\}.
\end{align*}
The equation $\phi''(t)=0$ has exactly one solution because the ratio of its two exponential terms is strictly monotone when $k_1\neq k_2$.

If $k_1>k_2$, then $\phi''(t)>0$ for sufficiently negative $t$ and $\phi''(t)<0$ for sufficiently positive $t$. Hence $\phi'$ first increases and then decreases. Moreover,
\[
 \lim_{t\to-\infty}\phi'(t)=k_1-k_2>0,
 \qquad
 \lim_{t\to\infty}\phi'(t)=-\infty.
\]
Therefore $\phi'$ has exactly one zero and $\phi$ has a unique maximum. From \eqref{eq:delta-general},
\[
 \lim_{t\to-\infty}\phi(t)=-\infty,
 \qquad
 \lim_{t\to\infty}\phi(t)=-\infty.
\]
Thus $\phi$ has zero, one tangential zero at its unique maximum, or two simple zeros, and it is positive only between two simple zeros. It remains to verify that the tangential zero, when present, is exactly double. Let $t_\star$ denote the unique stationary point and write
\[
 r_i^\star=\exp\{k_i(t_\star-\alpha_i)\}>0.
\]
If both $\phi'(t_\star)=0$ and $\phi''(t_\star)=0$, then
\[
 k_1-k_2-k_1r_1^\star+k_2r_2^\star=0,
 \qquad
 -k_1^2r_1^\star+k_2^2r_2^\star=0.
\]
The second equation gives $r_2^\star=(k_1^2/k_2^2)r_1^\star$, and substitution into the first gives
\[
 (k_1-k_2)
 \left(1+\frac{k_1}{k_2}r_1^\star\right)=0,
\]
which is impossible because $k_1\neq k_2$ and $r_1^\star>0$. Hence $\phi''(t_\star)\neq0$. Therefore, if $\phi(t_\star)=0$, then $t_\star$ is a zero of multiplicity exactly two. Since the map $y=e^t$ is a smooth one-to-one reparameterization with nonzero derivative, the corresponding positive root of $\delta(y)$ also has multiplicity exactly two.

If $k_1<k_2$, the signs reverse: $\phi'$ first decreases and then increases,
\[
 \lim_{t\to-\infty}\phi'(t)=k_1-k_2<0,
 \qquad
 \lim_{t\to\infty}\phi'(t)=+\infty,
\]
so $\phi$ has a unique minimum. Also $\phi(t)\to+\infty$ in both tails. The same argument shows that a tangential zero at the minimum has multiplicity exactly two. The stated root count and class regions follow.
\end{proof}

\section{Derivation of the information identity}
\label{app:information}

\begin{proof}[Proof of \Cref{lem:ig-information}]
From \eqref{eq:ig-decomp},
\[
 \ell_{\mathrm{ig},1}(\theta)
 =
 \log p_Y(Y;\theta)
 +(1-M)\log p(Z\mid Y;\theta),
\]
so
\[
 \psi_{\mathrm{ig}}(\theta)
 =
 S_Y(\theta)+(1-M)S_{Z\mid Y}(\theta).
\]
Conditional score identities give
\[
 \E\{S_{Z\mid Y}(\theta)\mid Y\}=0,
 \qquad
 -\E\left\{
 \frac{\partial S_{Z\mid Y}(\theta)}
 {\partial\theta^\top}
 \,\middle|\,Y
 \right\}
 =
 \J_{Z\mid Y}(Y;\theta).
\]
Because $M\perp Z\mid Y$,
\[
 \E\{(1-M)S_{Z\mid Y}(\theta)\mid Y\}
 =
 \{1-q(Y)\}\E\{S_{Z\mid Y}(\theta)\mid Y\}=0.
\]
Hence the cross terms between $S_Y$ and $(1-M)S_{Z\mid Y}$ vanish. Since $(1-M)^2=1-M$,
\begin{align*}
 \E\{\psi_{\mathrm{ig}}\psi_{\mathrm{ig}}^\top\}
 &=
 \J_Y
 +\E\!\left[\{1-q(Y)\}\J_{Z\mid Y}(Y;\theta)\right],\\
 -\E\left\{\frac{\partial\psi_{\mathrm{ig}}}{\partial\theta^\top}\right\}
 &=
 \J_Y
 +\E\!\left[\{1-q(Y)\}\J_{Z\mid Y}(Y;\theta)\right],
\end{align*}
where $\J_Y$ is the Fisher information in the marginal feature density $p_Y$. The complete-data score decomposition
\[
 \log p(Y,Z;\theta)
 =
 \log p_Y(Y;\theta)+\log p(Z\mid Y;\theta)
\]
gives
\[
 \J_{\mathrm{CC}}
 =
 \J_Y+\E\{\J_{Z\mid Y}(Y;\theta)\}.
\]
Therefore,
\[
 \J_Y
 +\E\!\left[\{1-q(Y)\}\J_{Z\mid Y}(Y;\theta)\right]
 =
 \J_{\mathrm{CC}}
 -\E\{q(Y)\J_{Z\mid Y}(Y;\theta)\}
 =
 \J_{\mathrm{ig}},
\]
by \eqref{eq:Jlost}--\eqref{eq:Jig}. Thus the sensitivity and variability matrices of the ignoring score coincide, proving \eqref{eq:ig-information-equality}. Under the additional consistency, nonsingularity, and standard local M-estimation regularity conditions stated in \Cref{lem:ig-information}, the usual asymptotic linearization applies. Its sandwich covariance reduces to
\[
 \J_{\mathrm{ig}}^{-1}
 \J_{\mathrm{ig}}
 \J_{\mathrm{ig}}^{-1}
 =
 \J_{\mathrm{ig}}^{-1},
\]
which proves \eqref{eq:ig-asymptotic-normality}.
\end{proof}

\begin{proof}[Derivation for \Cref{thm:information}]
Since
\[
\ell_{\mathrm{full}}(\theta,\xi)
=
\ell_{\mathrm{ig}}(\theta)
+
\ell_{\mathrm{miss}}(\theta,\xi),
\]
linearity of the expected negative Hessian implies that the information
contributions from the ignored likelihood and the Bernoulli missingness
likelihood add, while the blocks involving $\xi$ arise only from the
missingness likelihood. By \Cref{lem:ig-information} and
\eqref{eq:Jig}, the $\theta\theta$ block contributed by
$\ell_{\mathrm{ig}}$ is $\J_{\mathrm{ig}}$. Hence the joint information
matrix for $(\theta^\top,\xi^\top)^\top$ is
\[
\begin{pmatrix}
\J_{\mathrm{ig}}+B_{\theta\theta} & B_{\theta\xi}\\
B_{\xi\theta} & B_{\xi\xi}
\end{pmatrix}.
\]
Profiling out the nuisance parameter $\xi$ gives the Schur complement
of $B_{\xi\xi}$:
\[
\J_{\mathrm{ig}}
+
B_{\theta\theta}
-
B_{\theta\xi}B_{\xi\xi}^{-1}B_{\xi\theta}.
\]
Substituting \eqref{eq:Jig} and \eqref{eq:Jmissadj} yields
\[
\J_{\mathrm{full}}
=
\J_{\mathrm{CC}}
-
\J_{\mathrm{lost}}
+
\J_{\mathrm{miss}}^{\mathrm{adj}},
\]
which proves \eqref{eq:main-info-decomp}. Moreover,
\[
\J_{\mathrm{full}}-\J_{\mathrm{CC}}
=
\J_{\mathrm{miss}}^{\mathrm{adj}}-\J_{\mathrm{lost}}.
\]
Thus, if \eqref{eq:info-dominance} holds and both
$\J_{\mathrm{full}}$ and $\J_{\mathrm{CC}}$ are positive definite, then
\[
\J_{\mathrm{full}}>\J_{\mathrm{CC}}.
\]
Inversion reverses the Loewner order, giving
\[
\J_{\mathrm{full}}^{-1}
<
\J_{\mathrm{CC}}^{-1}.
\]
\end{proof}

\section{Proof of the expected error-rate expansion}
\label{app:risk}

\begin{proof}[Proof of \Cref{thm:risk-hessian}]
Write
\begin{equation*}
 h(y)=\pi_1f_1(y)-\pi_2f_2(y).
\end{equation*}
Since
\begin{equation*}
 \tau_1(y;\theta)
 =\frac{e^{\delta(y;\theta)}}{1+e^{\delta(y;\theta)}},
\end{equation*}
we have
\begin{equation*}
 \begin{aligned}
 h(y)
 &=p_Y(y;\theta)\{2\tau_1(y;\theta)-1\}\\
 &=p_Y(y;\theta)
 \tanh\left\{\frac{\delta(y;\theta)}{2}\right\}.
 \end{aligned}
\end{equation*}

Let $y_j$ be a Bayes decision boundary. Then
$\delta(y_j;\theta)=0$, and hence $h(y_j)=0$. Because
$p_Y(\,\cdot\,;\theta)$ is continuous at $y_j$,
$\delta(\,\cdot\,;\theta)$ is differentiable at $y_j$, and
\begin{equation*}
 \tanh(u/2)=u/2+o(u)
 \qquad\text{as }u\to0,
\end{equation*}
it follows that
\begin{equation}
 h(y)
 =\frac{p_Y(y_j;\theta)}{2}
 \delta_y(y_j;\theta)(y-y_j)
 +o(|y-y_j|)
 \qquad\text{as }y\to y_j.
 \label{eq:h-local-expansion}
\end{equation}
In particular, $h$ is differentiable at $y_j$, with
\begin{equation*}
 h'(y_j)
 =\frac{p_Y(y_j;\theta)}{2}
 \delta_y(y_j;\theta).
\end{equation*}

By \Cref{ass:boundaries} and the implicit function theorem, for
$\widetilde\theta$ in a neighbourhood of $\theta$, the boundary
$y_j$ extends to a differentiable root $y_j(\widetilde\theta)$
satisfying
\begin{equation*}
 \delta\{y_j(\widetilde\theta);\widetilde\theta\}=0.
\end{equation*}
A first-order expansion about $(y_j,\theta)$ gives
\begin{equation}
 y_j(\widetilde\theta)-y_j
 =-\frac{g(y_j;\theta)^\top(\widetilde\theta-\theta)}
 {\delta_y(y_j;\theta)}
 +o(\|\widetilde\theta-\theta\|).
 \label{eq:root-gradient}
\end{equation}

The excess error rate relative to the Bayes error admits the
representation
\begin{equation*}
 R(\widetilde\theta;\theta)-R(\theta;\theta)
 =\int_0^\infty |h(y)|
 \mathbf 1\left\{
 \operatorname{sign}\delta(y;\widetilde\theta)
 \neq\operatorname{sign}\delta(y;\theta)
 \right\}\dd y.
\end{equation*}
By the stability and simplicity of the Bayes decision boundaries,
for $\widetilde\theta$ sufficiently close to $\theta$, disagreement
between the two rules occurs only on the disjoint intervals having
endpoints $y_j$ and $y_j(\widetilde\theta)$.

Using \eqref{eq:h-local-expansion}, we obtain
\begin{equation*}
 \begin{aligned}
 &\int_{\min\{y_j,y_j(\widetilde\theta)\}}^
 {\max\{y_j,y_j(\widetilde\theta)\}}
 |h(y)|\dd y\\
 &\quad=
 \frac{p_Y(y_j;\theta)|\delta_y(y_j;\theta)|}{4}
 \{y_j(\widetilde\theta)-y_j\}^2
 +o\left(\{y_j(\widetilde\theta)-y_j\}^2\right)\\
 &\quad=
 \frac{p_Y(y_j;\theta)}{4|\delta_y(y_j;\theta)|}
 \left\{g(y_j;\theta)^\top
 (\widetilde\theta-\theta)\right\}^2
 +o(\|\widetilde\theta-\theta\|^2),
 \end{aligned}
\end{equation*}
where the second equality follows from \eqref{eq:root-gradient}.

Summing over the finitely many boundaries yields
\begin{equation}
 R(\widetilde\theta;\theta)-\err(\theta)
 =\frac12(\widetilde\theta-\theta)^\top
 H_R(\theta)(\widetilde\theta-\theta)
 +o(\|\widetilde\theta-\theta\|^2),
 \label{eq:risk-quadratic-expansion}
\end{equation}
where
\begin{equation*}
 H_R(\theta)
 =\sum_{j=1}^K
 \frac{p_Y(y_j;\theta)}{2|\delta_y(y_j;\theta)|}
 g(y_j;\theta)g(y_j;\theta)^\top.
\end{equation*}
This proves \eqref{eq:risk-hessian} and also shows that
\begin{equation*}
 \left.
 \frac{\partial R(\widetilde\theta;\theta)}
 {\partial\widetilde\theta}
 \right|_{\widetilde\theta=\theta}=0.
\end{equation*}

Now set
\begin{equation*}
 \Delta_n=\widehat\theta-\theta.
\end{equation*}
By root-$n$ consistency, $\Delta_n=O_p(n^{-1/2})$, and therefore
\eqref{eq:risk-quadratic-expansion} gives
\begin{equation}
 R(\widehat\theta;\theta)-\err(\theta)
 =\frac12\Delta_n^\top H_R(\theta)\Delta_n
 +o_p(n^{-1}).
 \label{eq:random-risk-expansion}
\end{equation}

To justify taking expectations in
\eqref{eq:random-risk-expansion}, fix a radius $r>0$ such that the
local expansion \eqref{eq:risk-quadratic-expansion} holds whenever
$\|\Delta\|\le r$, and write
\[
 \mathcal A_n=\{\|\Delta_n\|\le r\}.
\]
Define
\[
 \operatorname{rem}(\Delta)
 =
 R(\theta+\Delta;\theta)-\err(\theta)
 -\frac12\Delta^\top H_R(\theta)\Delta.
\]
By \eqref{eq:risk-quadratic-expansion},
\[
 \operatorname{rem}(\Delta)=o(\|\Delta\|^2)
 \qquad\text{as }\Delta\to0.
\]
Hence there exists a nonnegative deterministic function $\eta$,
with $\eta(t)\to0$ as $t\downarrow0$, such that
\[
 |\operatorname{rem}(\Delta)|
 \le
 \eta(\|\Delta\|)\|\Delta\|^2,
 \qquad \|\Delta\|\le r.
\]
Condition \eqref{eq:higher-moment-condition} implies that
$\{n\|\Delta_n\|^2\}$ is uniformly integrable, while root-$n$
consistency gives
$\eta(\|\Delta_n\|)\to0$ in probability. Hence
\[
 n\,\E\!\left[
 |\operatorname{rem}(\Delta_n)|
 \mathbf 1_{\mathcal A_n}
 \right]\longrightarrow0.
\]

Moreover, Markov's inequality and
\eqref{eq:higher-moment-condition} give
\[
 \Prb(\mathcal A_n^c)
 \le
 r^{-(2+\epsilon)}\E\|\Delta_n\|^{2+\epsilon}
 =
 O(n^{-1-\epsilon/2})
 =
 o(n^{-1}).
\]
Because
\[
 0\le R(\widehat\theta;\theta)-\err(\theta)\le1,
\]
the contribution of $\mathcal A_n^c$ to the expected excess risk is
$o(n^{-1})$. Uniform integrability of $n\|\Delta_n\|^2$ also gives
\[
 \E\!\left[
 \|\Delta_n\|^2\mathbf 1_{\mathcal A_n^c}
 \right]
 =o(n^{-1}),
\]
so the quadratic term may likewise be restricted to
$\mathcal A_n$ without changing its expectation at order $n^{-1}$.
Consequently,
\begin{equation*}
 \begin{aligned}
 \E\{R(\widehat\theta;\theta)\}-\err(\theta)
 &=\frac12
 \tr\left[
 H_R(\theta)
 \E\{\Delta_n\Delta_n^\top\}
 \right]
 +o(n^{-1})\\
 &=\frac{1}{2n}\tr\{H_R(\theta)V\}
 +o(n^{-1}),
 \end{aligned}
\end{equation*}
where the second equality follows from
\eqref{eq:moment-convergence}. Substitution of
\eqref{eq:risk-hessian} proves
\eqref{eq:excess-risk-generalV}. For a correctly specified regular
maximum-likelihood estimator, setting $V=\J(\theta)^{-1}$ gives
\eqref{eq:excess-risk}; for the ignoring estimator,
\Cref{lem:ig-information} gives the corresponding result with
$V=\J_{\mathrm{ig}}^{-1}$.
\end{proof}

\section{Hard-drive expert-label construction and diagnostics}
\label{app:hard-drive-experts}

A separate, fully labelled training subset was used to fit a committee of seven classifiers: logistic regression, linear discriminant analysis, regularised discriminant analysis, a support vector machine, random forest, $k$-nearest neighbours, and naive Bayes. To avoid information leakage, this training subset was excluded from the subsequent Weibull-mixture analysis. For observation $j$, we constructed the following committee-based ranking score:
\[
\kappa_j
=
a_j
+
\left|\overline p_{2j}-\tfrac12\right|
-
s_{2j},
\]
where $a_j$ denotes the proportion of classifiers agreeing on the predicted class, $\overline p_{2j}$ is the mean predicted probability of class~2 across the committee, and $s_{2j}$ is the corresponding between-classifier standard deviation. Thus, a larger value of $\kappa_j$ reflects stronger agreement, a mean prediction farther from the classification threshold of $1/2$, and less variation among the committee members. The score is used only to rank observations for the semi-synthetic label-missingness construction.

Observations were ranked in decreasing order of $\kappa_j$. Labels were then released for the highest-ranked 60\%, 70\%, and 80\% of observations to form the junior, intermediate, and senior expert regimes, respectively. This construction represents the idea that more experienced experts can assign labels to a larger proportion of ambiguous observations. The three labelled sets were nested, and the resulting missing-label fractions were 0.402, 0.304, and 0.201, respectively.

For expert regime $e\in\{\mathrm{J},\mathrm{I},\mathrm{S}\}$, let $M_j^{(e)}$ denote the missing-label indicator for observation $j$, and let $\widehat{\theta}_{\mathrm{CC}}$ denote the common-shape CC estimate obtained from the fully labelled analysis sample. We then fitted the descriptive diagnostic model
\[
\operatorname{logit}
\left\{
\Pr\!\left(M_j^{(e)}=1\mid Y_j\right)
\right\}
=
\beta_{0e}
+
\beta_{1e}
\delta_C\!\left(Y_j;\widehat{\theta}_{\mathrm{CC}}\right)^2.
\]
The estimated slopes were negative in all three regimes (Table~\ref{tab:hard-drive-missingness}), consistent with observations closer to the estimated Bayes decision boundary being more likely to remain unlabelled. These estimates are used only as descriptive diagnostics; their magnitudes are not directly comparable across regimes because the regimes have different overall missing-label fractions.

\begin{table}[!ht]
\centering
\caption{Descriptive association between squared log-posterior odds
and label missingness.}
\label{tab:hard-drive-missingness}
\begin{tabular}{lrr}
\toprule
Expert & Missing fraction & $\widehat{\beta}_{1e}$ \\
\midrule
Junior       & 0.402 & $-14.157$ \\
Intermediate & 0.304 & $-11.910$ \\
Senior       & 0.201 & $-32.957$ \\
\bottomrule
\end{tabular}
\end{table}

The class-specific cross-validation results are reported in Table~\ref{tab:hard-drive-class-rates}. The same folds were used for all three expert regimes, so the $\mathrm{CC}$ rates were identical across regimes. Relative to $\mathrm{ig}$, $\mathrm{full}$ increased class~2 sensitivity but reduced class~1 specificity. Its lower overall estimated error rate therefore did not arise from uniform improvements in both classes.

\begin{table}[!ht]
\centering
\caption{Mean class-specific rates from five repetitions of five-fold cross-validation.}
\label{tab:hard-drive-class-rates}
\small
\begin{tabular*}{\linewidth}{@{\extracolsep{\fill}}lrrrrrr@{}}
\toprule
& \multicolumn{3}{c}{Class 2 sensitivity}
& \multicolumn{3}{c}{Class 1 specificity} \\
\cmidrule(lr){2-4}
\cmidrule(lr){5-7}
Expert
& $\mathrm{CC}$
& $\mathrm{ig}$
& $\mathrm{full}$
& $\mathrm{CC}$
& $\mathrm{ig}$
& $\mathrm{full}$ \\
\midrule
Junior
& 0.633 & 0.355 & 0.735
& 0.514 & 0.708 & 0.485 \\
Intermediate
& 0.633 & 0.346 & 0.748
& 0.514 & 0.718 & 0.473 \\
Senior
& 0.633 & 0.412 & 0.721
& 0.514 & 0.685 & 0.487 \\
\bottomrule
\end{tabular*}
\end{table}

\FloatBarrier
\bibliographystyle{apalike}
\bibliography{weibull_ssl_refs}

\end{document}